\documentclass{article}

\usepackage{microtype}
\usepackage{graphicx}
\usepackage{subfigure}
\usepackage{booktabs} 
\usepackage{lipsum}  
\usepackage{hyperref}
 \usepackage{tcolorbox}
 \usepackage{pifont}
 \usepackage{enumitem}

\usepackage{iclr2025_conference,times}


\usepackage{amsmath,amsfonts,bm}









\def\eqref#1{equation~\ref{#1}}









\def\1{\bm{1}}










\DeclareMathAlphabet{\mathsfit}{\encodingdefault}{\sfdefault}{m}{sl}
\SetMathAlphabet{\mathsfit}{bold}{\encodingdefault}{\sfdefault}{bx}{n}













\usepackage{hyperref}
\usepackage{url}

\iclrfinalcopy



\usepackage{amsmath}
\usepackage{amssymb}
\usepackage{mathtools}
\usepackage{amsthm}
\usepackage{multirow}

\usepackage[capitalize,noabbrev]{cleveref}
\usepackage{mdframed}
\usepackage{wrapfig}

\usepackage{xcolor}         

\usepackage{graphicx}       

\usepackage{dsfont}         

\PassOptionsToPackage{comma}{natbib}

\newcommand{\yunzhen}[1]{\textcolor{red}{Yunzhen: {#1}}}

\newcommand{\remove}[1]{}

\newcommand\blankfootnote[1]{%
  \let\svthefootnote\thefootnote%
  \let\thefootnote\relax%
  \footnotetext{#1}%
  \let\thefootnote\svthefootnote%
}


\theoremstyle{plain}
\newtheorem{theorem}{Theorem}[section]
\newtheorem{proposition}{Proposition}[section]

\newtheorem{corollary}[theorem]{Corollary}

\newtheorem{assumption}[theorem]{Assumption}
\newtheorem{remark}[theorem]{Remark}
\newtheorem{condition}[theorem]{Condition}




\bibliographystyle{iclr2025_conference}
\usepackage{caption}

\usepackage{subcaption}
\usepackage{amsmath}
\usepackage{bm} 
\usepackage{amsthm} 
\usepackage{wrapfig}

\title{Beyond Model Collapse: Scaling Up with Synthesized Data Requires Verification}


\author{Yunzhen Feng\footnotemark[1] \  \footnotemark[2] \ \footnotemark[4] \ \footnotemark[6] \quad Elvis Dohmatob\footnotemark[1] \ \footnotemark[4] \quad Pu Yang\footnotemark[1] \ \footnotemark[5] \quad Francois Charton\footnotemark[4] \quad Julia Kempe\footnotemark[2] \ \footnotemark[3] \ 
 \footnotemark[4]  \\
\footnotemark[1] \ Equal Contributions \\
\footnotemark[4] \ Meta FAIR \\
   \footnotemark[2] \ Center for Data Science, New York University \\
   \footnotemark[3] \ Courant Institue of Mathematical Sciences, New York University \\
   \footnotemark[5] \ School of Mathematical Sciences, Peking University \\
   \footnotemark[6] \ Part of the work was done during an internship at Meta. \\
   \texttt{yf2231@nyu.edu} \\
}

%



\begin{document}

 \maketitle

\begin{abstract} 

\remove{Synthesized data from generative models is increasingly utilized across various domains as a cost-effective alternative to human-labeled data. Although performance improvements have been noted, concerns exist regarding model collapse, where synthesized data leads to worse performance than the original model, irrespective of increased data volumes. In this work, we investigate how (synthesized) data selection with feedback can prevent model collapse. We theoretically characterize the conditions under which selecting synthesized data with reinforcement can achieve optimal performance in the asymptotic regime for classification in a Gaussian mixture model and provide supporting simulations for finite regimes. In these scenarios, oracle (human) supervision consistently matches the performance of training with original data (``reinforcement is all you need"). Given that it is more efficient and cost-effective for humans or GPT-4 to discern between good and bad samples than to generate high-quality samples directly, we propose enhancing synthesized data with reinforcement to derive scaling benefits and avoid the route of degradation from synthesized data that leads to model collapse. \yunzhen{delete `the route of degradation from synthesized data that leads to'?} We study the predictions of our theory in practical settings with experiments on linear algebra using transformers and news summarization with Llama-2. Model collapse is observed in both cases. While these models can produce superior samples in the synthesized set, they have limited inherent ability to distinguish high-quality samples, necessitating reinforcement. Oracle supervision can enhance the quality of the generated dataset beyond the original dataset used for training the generator, thereby preventing model collapse and validating popular scaling approaches like RLHF. }


\remove{Synthesized data from generative models is increasingly considered as an alternative to human-annotated data for training Large Language Models. This raises concerns about model collapse: a drop in performance of models trained on generated data. Considering that it is easier for both humans and machines to tell between good and bad examples than to generate high-quality samples, we investigate the use of verification on synthesized data to prevent model collapse. To support this, we provide a theoretical characterization using Gaussian mixtures, linear classifiers, and linear verifiers. We derive conditions with measurable proxies to assess whether the verifier can effectively select synthesized data that leads to optimal performance. We conduct two large-scale experiments—computing matrix eigenvalues with transformers and news summarization with large language models—both of which exhibit model collapse when trained on generated data. Our findings show that verifiers - even imperfect ones - can prevent model collapse and our proposed proxy strongly correlates with performance. 
}

Large Language Models (LLM) are increasingly trained on data generated by other LLM, either because generated text and images become part of the pre-training corpus, or because synthetized data is used as a replacement for expensive human-annotation. This raises concerns about \emph{model collapse}, a drop in model performance when their training sets include generated data. Considering that it is easier for both humans and machines to tell between good and bad examples than to generate high-quality samples, we investigate the use of verification on synthesized data to prevent model collapse. We provide a theoretical characterization using Gaussian mixtures, linear classifiers, and linear verifiers to  derive conditions with measurable proxies to assess whether the verifier can effectively select synthesized data that leads to optimal performance. We experiment with two practical tasks -- computing matrix eigenvalues with transformers and news summarization with LLMs -- which both exhibit model collapse when trained on generated data, and show that verifiers, even imperfect ones, can indeed be harnessed to prevent model collapse and that our proposed proxy measure strongly correlates with performance. 



\end{abstract}

\section{Introduction}
\label{sec:intr}



As generative models for language, images, and video continue to achieve human-level performance \citep{touvron2023llama, achiam2023gpt, pmlr-v139-ramesh21a, Rombach_2022_CVPR, Sora}, they are increasingly used to synthesize data across diverse domains, including coding \citep{haluptzok2022language} and mathematics \citep{trinh2024solving}. With this rise in AI-generated data, there is growing interest in its potential to replace expensive human annotators.

This gradual replacement of human-written corpora by machine-generated tokens gives rise to a number of concerns, notably the risk of ``model collapse''~\cite{shumailov2023curse,Shumailov2024Nature}, where iterated training on synthesized data brings a drop in model performance, and, ultimately, ``dumber models''. This phenomenon was observed empirically~\citep{Hataya_2023_ICCV, martínez2023combining, martínez2023understanding, bohacek2023nepotistically, briesch2023large, guo2023curious} and described theoretically~\citep{alemohammad2023selfconsuming, bertrand2023stability, dohmatob2024model}. Its main consequence is the breaking of known scaling laws~\citep{dohmatob2024tale}: as data becomes more  synthetic, larger training sets do not enhance performance. Current thinking around model collapse recommends that synthetic data is to be avoided for model training lest the system deteriorate completely.

 This raises a critical question: are synthesized data so fundamentally flawed that they must be identified and discarded using detection techniques \citep{zhang2024regurgitative}, corrected through various refinement methods \citep{gillmanself} or used as negative feedback \citep{alemohammad2024selfimprovingdiffusionmodelssynthetic}?
In this work, we argue that synthesized data contain abundant valuable information that can be effectively leveraged. 
We advocate for a paradigm of verifier-based selection of synthesized data and provide criteria 
for when this process is successful - even with imperfect verifiers. 

Throughout this paper, we focus on a scenario where synthesized data serve as the labels (answers) to questions—a common framework in tasks such as question answering, code generation, and mathematical reasoning.
 To begin, we explore the information contained in synthesized data through a controlled experiment: training transformers on a linear algebra task. This setting allows us to both understand the solutions and easily synthesize data, placing it into a long established corpus of works that study interesting phenomena in a controlled setting to advance our understanding of the underlying mechanisms in larger models in the wild (see e.g. \cite{power2022grokking, garg2022can, charton2023transformers,dohmatob2024tale}).
In this context, we find that the accuracy of the most accurate synthesized solution among the top candidates is three times higher than that of the solution selected by the model itself based on perplexity — its own measure of quality. 
This finding provides a positive answer to our question: the model is indeed capable of generating high-quality solutions. However, it cannot intrinsically {\em identify} the best solution using perplexity alone. Thus, effectively scaling with synthesized data requires a robust selection and verification process.

We proceed to provide a theoretical characterization using Gaussian mixtures and linear classifiers, employing an external verifier to select the generated data. In the limit of infinite synthesized data, we analyze the conditions required for the generator and verifier to enable the model, trained on selected synthesized  data, to achieve Bayes-optimal results. Notably, there is a sharp phase transition — from zero accuracy due to errors in the synthesized data to optimal accuracy. Our theory confirms that there is abundant information within the synthesized data, and that proper, not necessarily perfect, selection can overcome model collapse. We also identify a measurable proxy function that 
characterizes the ability of the verifier to select synthesized data that results in good models when retraining on it. 

We conduct two large-scale experiments 
to test our theoretical insights 
: (1) the above mentioned transformers on a linear algebra task and (2) news summarization using the LLM Llama-2 \citep{touvron2023llama}. In both experiments, we apply various verifiers for selection and compute {our theoretical proxy function}. Our results show that relying solely on generated data leads to poorer performance than that of the generator itself, even with increased data volume, indicating model collapse. However, even with noisy and imperfect selection methods, synthesized data can be improved to yield models that outperform the original generator. Our proxy function strongly correlates with final performance across all cases. In particular it elucidates the counter-intuitive fact that a stronger model is not automatically a better selector — as we demonstrate by showing inferior performance when selecting with Llama-3 compared to self-selection with Llama-2 for Llama-2-generated text.

\begin{figure}[tb]
\vspace{-25pt}
    \centering
    \subfigure[The generator is capable of producing high-quality data, and verifiers can extract this data through selection to prevent model collapse, which occurs in the absence of such selection.]{
    \includegraphics[width=0.45\linewidth]{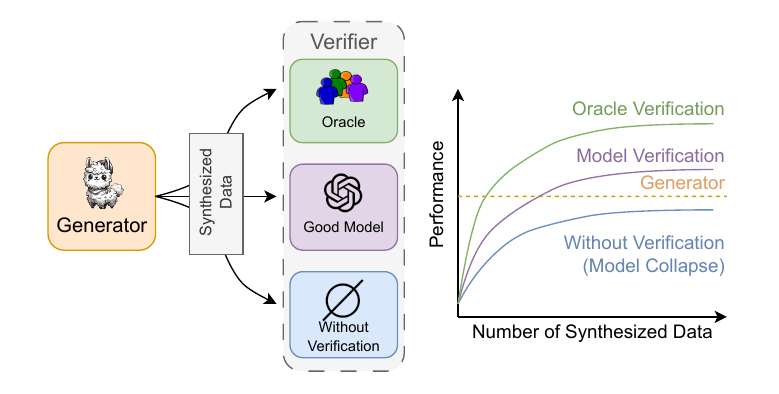}}
    \hspace{15pt}
    \subfigure[In theory, we consider a Gaussian mixture model with a linear generator and linear pruner. The pruner improves synthesized data through selection, resulting in improved performance.]{
    \includegraphics[width=0.45\linewidth]{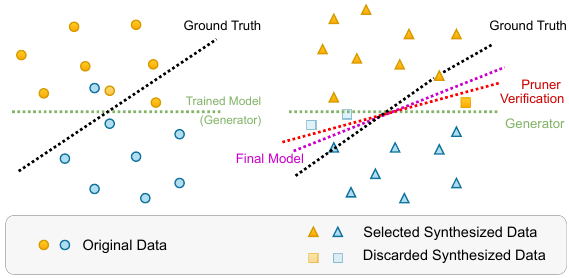}}
    \caption{\small{Illustrative figures for our proposal \textbf{(a)} and for the theoretical and simulation settings \textbf{(b)}.}}
    \label{fig:teaser}
    \vspace{-15pt}
\end{figure}

We summarize our contributions as follows:
\begin{itemize}[noitemsep, topsep=0pt, leftmargin=20pt]
\item We provide a theoretical analysis in the high-dimensional limit with infinite data proving that synthesized data with appropriate selection can lead to optimal performance, and provide a characterization of necessary conditions via a measurable proxy function (Section \ref{sec:theory_main_paper}). 
\item We validate our theoretical findings in three empirical settings of increasing departure from our theoretical assumptions:
\begin{itemize}[noitemsep, topsep=0pt, leftmargin=20pt]
\item Simulation results with linear classifiers, tested in a finite-data regime (Section \ref{sec:toy}),
\item Linear algebra tasks with transformers trained in a generative way
(Section \ref{subsec:exp-math}),
\item News summarization using Llama-2
(Section \ref{sec:sub-news}). 
\end{itemize}

\end{itemize}

Crucially, note that all that is needed to re-attain model performance akin to training on clean original data is the ability to {\em rank} high-quality from low quality labels; arguably, a task much simpler than annotating the labels. Thus, to go beyond model collapse and continue scaling up with synthesized data, {\em verification is all you need!}

\vspace{-5pt}
\section{Related Work}
\label{sec:relat}

We here only list works of direct relevance to ours, with an extensive reference list in Appendix \ref{sec:related_append}.
%
%
%
\paragraph{Model Collapse.} With the advancement of generative models, synthesized data generated by these models has become increasingly prevalent online, mixing irreversibly into our training corpora. Recent studies have highlighted the potential for dramatic deterioration in downstream models, a phenomenon known as {\em ``model collapse"} \citep{shumailov2023curse}.  Empirical studies have demonstrated this issue in various settings \citep{Hataya_2023_ICCV, martínez2023combining, martínez2023understanding, bohacek2023nepotistically, briesch2023large}. Synthesized datasets have been shown to reduce diversity \citep{padmakumar2024writing, guo2023curious} and cause distributional distortions \citep{lebrun2021evaluating}. Theoretical analysis also examines the effects of iterative training on self-generated data \citep{alemohammad2023selfconsuming, bertrand2023stability, dohmatob2024model, seddik2024bad}. Notably, \cite{dohmatob2024tale} warns that model collapse signifies a break in customary neural scaling laws \citep{kaplan2020scaling,hoffmann2022trainingChinchilla}, where increasing synthesized data volume does not enhance performance as effectively as scaling with human-generated data. As a result, recent works have focused on avoiding or correcting synthetic data to prevent model collapse. \cite{gillmanself} propose using a correction function informed by expert knowledge to modify the synthesized data. \cite{alemohammad2024selfimprovingdiffusionmodelssynthetic} leverage a model trained on synthetic data as negative guidance for diffusion models. \cite{zhang2024regurgitative} employ the confidence score and an AI detection classifier to discard synthesized data. In contrast, we propose leveraging the synthesized data through selection techniques.

\textbf{Benefits of Synthesized Data.} Synthetic data holds great potential, as it is much easier and cheaper to scale compared to human-labeled data.  Numerous empirical studies have demonstrated the benefits of synthesized data across a wide range of settings. Common practices include cases where the downstream task slightly differs from that of the data-generating model, where the generating model is significantly stronger than the consuming one, or when better prompt engineering and external information are utilized. In Appendix \ref{sec:taxonomy}, we provide a taxonomy that outlines when and how synthesized data can be advantageous. Data selection is already employed in some works, particularly in code generation and mathematics, where natural verifiers such as compilers, solutions, or heuristic verifiers exist. For instance, \cite{haluptzok2022language} generate synthesized code and filter out incorrect samples. \cite{ulmer2024bootstrapping} use conversational metrics to filter synthetic dialogue data. \cite{trinh2024solving} utilize a symbolic deduction engine to verify correct solutions for Olympiad geometry problems. \cite{setlur2024rl} apply a final answer verifier to distinguish between good and bad synthetic data. Although verifiers are used in these cases, their effects on performance have not been systematically explored, especially in terms of how different types of verifiers influence outcomes.

\textbf{Data Selection.} Data selection is a preprocessing technique typically applied to original datasets with human labels, as discussed in various related works (see Appendix \ref{sec:related_append_3}). In contrast, our work focuses on data selection for {\em synthesized} data, which does not represent a noisy version of the ground truth but rather a skewed distribution. Our aim is not to propose new selection methods but to demonstrate how selection can prevent model collapse and to analyze the conditions under which it is effective.


\section{Warmup}\label{sec:warmup}

We first focus on transformer models on a mathematical task, which offers an interpretable setting to understand the generation quality since we have a clear metric for error and a computable ground truth, as well as the ability to generate as much (original)  data as desirable. 

\label{subsec:math_motivation}

\textbf{Setting.} We follow \citet{charton2022linear}, who showed that transformers \citep{transformer17} can learn to predict the eigenvalues of $5\times 5$ real symmetric matrices with independent and identically distributed entries, rounded to three significant digits. All training, test, and synthesized data are generated by sampling matrices with independent entries from $\mathcal{U}[-10, 10]$. A prediction is considered correct if the relative error in the $L^1$ norm is below a certain tolerance $\tau$. The synthesized data generator is trained on a limited sample of 200,000 examples with Adam for 65 epochs. Details on the tokenizer and optimization can be found in Appendix \ref{sec:append-math}.

We aim to evaluate the quality of the generator's outputs and determine how much useful information can be extracted from it. To achieve this, we employ \textit{beam search}, an inference technique that maintains multiple candidate sequences (referred to as "beams") during the generation process to explore a larger portion of the solution space. Beam search tracks the top $k$ candidate sequences based on their cumulative probability scores, expanding each by considering all possible next tokens. When $k=1$, beam search is equivalent to greedy decoding. The algorithm continues this expansion until the full generation process is complete, allowing us to assess the upper bound of the generator's capabilities and better understand its limitations and strengths.

\begin{wraptable}{r}{0.5\textwidth} 
\vspace{-11pt}
\small
    \centering
    \caption{\small \textbf{Generator accuracy for different beam sizes.} Left: the most accurate solution in the beam is evaluated.
    Right: the solution with lowest perplexity is evaluated.} 
    \vspace{-0.2cm}
    \setlength{\tabcolsep}{3.6pt}
    \begin{tabular}{l|ccc||ccc}
    \toprule 
    & \multicolumn{3}{c|}{Verify} & \multicolumn{3}{|c}{Verify} \\
    & \multicolumn{3}{c|}{all beams} & \multicolumn{3}{|c}{the best beam} \\
    \midrule
    Tolerance $\tau$  & 2\% & 1\% & 0.5\% & 2\% & 1\% & 0.5\% \\
    \midrule
     Beam 50  & \textbf{90.4} & \textbf{60.4} & \textbf{22.9} & \textbf{65.9} & \textbf{19.2} & \textbf{2.4} \\
     Beam 35 & 89.2 & 56.9 & 19.8 & 66.0 & 19.2 & 2.4 \\
     Beam 25 & 88.0 & 53.2 & 16.8  & 66.1 & 19.3 & 2.4 \\
     Beam 10 & 83.7 & 43.1 & 10.5 & 66.2 & 19.5 & 2.5 \\
     Beam 5 & 79.3 & 34.9 & 7.1 & 66.5 & 19.7 & 2.4 \\
     Greedy & \textbf{66.9} & \textbf{20.2} & \textbf{2.4} & \textbf{66.9} & \textbf{20.2} & \textbf{2.4} \\
     \bottomrule
    \end{tabular}
    \label{tab:modelAbeam}
\vspace{-20pt}
\end{wraptable}

In Table \ref{tab:modelAbeam}, we report the test accuracy of generated predictions using beam search. We consider two settings: evaluating only the best beam identified by the model (right) to understand the model's actual predictions, and another evaluating all $k$ candidates in the beams, with the best one contributing to the accuracy (left) to assess the model's full potential.

For self-selection, increasing the number of beams does not lead to any improvement in accuracy, despite significantly higher inference costs. However, in the left setting, moving from greedy decoding (beam 1) to beam 50 improves accuracy from 20.2\% to 60.4\%, with 
$\tau=1\%$. This indicates that \textbf{the model has considerable potential to generate improved solutions, but it lacks the inherent capability to autonomously select the best predictions}. Among the vast amount of synthesized data, there may be high-quality data, but effective selection is necessary.

\vspace{-5pt}

\section{Theoretical Insights}
\label{sec:theory_main_paper}

In this section, we aim to theoretically explore training and verification using synthesized data, and characterize the conditions under which verification enhances performance. To streamline our analysis, we focus exclusively on training with synthesized data, since the inclusion of additional real data is always advantageous. We consider learning with a family of high-dimensional data distributions. Within this context, the verification process is implemented as a {\em pruning strategy} applied to synthesized data. Crucially, we will not assume that the pruning strategy has access to the ground truth, as such an assumption would be overly restrictive for practical applications; rather, we will formulate our theory for general pruners, {which could for instance be} like another trained model. A full exposition of our general theory is in Appendix \ref{app:theory}
; for ease of exposition, we specialize here to Gaussian mixtures.

\subsection{Setting} \label{subsec:3.1}
\paragraph{Data Distribution.} We will consider distributions $P$ over $\mathbb R^d \times \{0,1\}$ with certain high dimensional concentration properties of a general form (Condition \ref{cond:main}). A special case are  binary {\em Gaussian mixtures}:
 features have
  conditional distribution given by
$
x \mid y \sim N(\mu_y,\Sigma),
$
where $\mu_y = (2y-1)\mu$, for some $\mu \in \mathbb R^d$ and $\Sigma$ is a positive-definite matrix  with $\mathbb E\,\|x\|^2 = \|\mu\|_2^2+\operatorname{tr}\Sigma = 1$. For further ease of exposition we will only consider balanced distributions: $$\mathbb P(y=1) = \mathbb P(y=0) = 1/2,\text{ for }(x,y) \sim P.$$ 


\paragraph{Synthesized Data.} Let $D_N=\{(x_1,y_1),\ldots,(x_N,y_N)\}$ be a dataset of $N$ i.i.d. pairs from the true distribution $P$ and let $D_N'=\{(x_1,y_1'),\ldots,(x_N,y_N')\}$ be the synthesized data generated from the same distribution, but where label $y'_i$ (instead of $y_i$) has been generated by an AI model. 


\textbf{Downstream Model and Pruning.} We will model our data selection (whether with or without feedback) via a {\em pruning strategy} $q=(q_1,\ldots , q_N)$ where $q_i$ is a bit which indicates whether the $i$th training example from $D_N'$ has survived pruning. For the downstream models we consider the family:
$$
\mathbb P (y =1\mid x,w) = \hat y := \sigma(x^\top w) \in (0,1), \quad \quad \sigma(z):=\frac{1}{1+e^{-z}}
$$
parametrized by a vector of weights $w \in \mathbb R^d$ and  sigmoid non-linearity $\sigma$. 
Let $\widehat w_N$ be obtained via logistic regression fitted on $D'_N$ with ridge regularization parameter $\lambda>0$. Thus, $\hat w$ is the  unique 
minimizer of the following objective function:
\begin{eqnarray}
L(w) := \frac{1}{N}\sum_{i=1}^N q_i\ell(\sigma(x_i^\top w), y_i') + \frac{\lambda}{2}\|w\|^2,
\label{eq:objective}
\end{eqnarray}
 where $\ell$ is the binary cross-entropy. The corresponding downstream classifier is $\widehat f_N = f_{\widehat w_N}$, where the notation $f_w$ refers to the linear classifier induced by a weights vector $w \in \mathbb R^d$, i.e $f_w(x) = (\operatorname{sign}(x^\top w) + 1) /2$.
\paragraph{Test Accuracy.} The test accuracy of the downstream model $\widehat f_N$ is defined by
$$
acc(\widehat f_N) := \mathbb P(\widehat f_N(x) = f_{\text{Bayes}}(x)),\text{ for a random test point }(x,y) \sim P,
$$
where $f_{\text{Bayes}}(z) := \mathbb E[y |x=z]$ is the Bayes-optimal classifier. In particular, note that $acc(f_{\text{Bayes}})=100\%$ by construction.
The quantity $acc(\widehat f_N)$  will be the main object of our analysis, and we will be interested in how it depends on errors in the generator $P$ and the choice of pruning strategy $q$, in the infinite-sample limit $N \to \infty$.


\subsection{Pruning Strategy}\label{subsec:3.2}

We consider a wide class  of parametrized pruning strategies $q$, which we term {\em Verification-Pruning} (see Appendix \ref{app:theory}).
They satisfy the following reasonable property:
\begin{assumption}[Independent Selection]
\label{ass:independent-selection}
The bits $q_1,\ldots,q_N \in \{0,1\}$ are independent. Thus, in particular, whether any training example $e_i:=(x_i,y_i') \in D_N'$ survives pruning or not is independent of what happens to the other examples $e_{j \ne i}$. 
\end{assumption}
We shall denote by $p \in [0,1)$, the probability that the label $y'$ of a synthesized example $(x,y')$ is different from the true label $y$, i.e
\begin{eqnarray}
    p := \mathbb P(y'\ne y).
\end{eqnarray}
Note that $p$ does not depend on the example index $i$, due to the i.i.d. assumption.

Our verification-pruning family (refer to Appendix \ref{app:theory} for details) is described by four parameters $(\phi_0,\phi_1,\psi_{01},\psi_{10})$, defined as follows:
\begin{eqnarray}\label{equ:psiphi}
\phi_k = \mathbb P(q = 1 \mid y'=k,y=k),\quad \psi_{k\ell}= \mathbb P(q = 1 \mid y'=\ell,y=k).
\end{eqnarray}

For simplicity of exposition, we will focus on {\em symmetric} pruning strategies,
$
\phi_1 = \phi_0 = \phi$ and $\psi_{01}=\psi_{10} = \psi.
$
Assumption \ref{ass:independent-selection} implies that for any class labels $k,\ell \in \{0,1\}$, the random variables $(z_{ik\ell})_{i \in [N]}$ defined by $z_{ik\ell}=1[y_i=k,y_i'=\ell,q_i=1]$ are i.i.d.  with Bernoulli($p_{k\ell}$) distribution, with
\begin{eqnarray}
\label{eq:pkl_dummy}
    p_{kk} = (1-p)\phi_k/2,\text{ and }p_{k\ell}=p\psi_k/2\text{ if }k \ne \ell.
\end{eqnarray}

 In this section we focus on a special case of {\em supervised} pruning strategies $q$ of the form:
\begin{eqnarray}\label{eq:pruner}
\text{\bf Supervised Pruning:} \quad q_i = 1[y'_i(x_i^\top w_{prune})>0],
\end{eqnarray}
for some $w_{prune} \in \mathbb R^d$.  This pruning strategy filters out all examples on which there is disagreement on the assigned label. In Appendix \ref{app:theory} we show how we can map this to $(\phi,\psi)$-pruning.


Let us provide two more notable examples of (symmetric) pruning strategies.

\textbf{No Pruning.} The case $(\phi,\psi) = (1,1)$ corresponds to no pruning, i.e using the entire training dataset. 

\textbf{Oracle Pruning.} The case $(\phi,\psi) = (1,0)$. The pruning strategy only keeps indices corresponding to examples in the dataset which have correct label (all corrupted labels are discarded).

\subsection{Performance of Models Trained with Pruning: Insights from Infinite-Sample Regime}    
The following is our main theoretical result (see Theorem \ref{thm:main} for full statement). It characterizes test accuracy $acc(\widehat f_N)$ of the downstream model on pruned data as a function of $p$ (the label disagreement) and the parameters $(\phi,\psi)$ of the pruner, in the theoretical limit of infinite training data ($N \to \infty$).
\begin{theorem}[Simplified version of Theorem \ref{thm:main}]
\label{thm:main_dumbeddown} 
Let Assumption \ref{ass:independent-selection} be in order.
Fix $p$, $\phi,\psi$ and define the breakdown point $p_\star \in (0,1)$ by $p_\star := 1/(1+\psi/\phi)$. For the family of data distributions obeying Condition \ref{cond:main} (including the Gaussian mixture), for a downstream model $\widehat f_N$ trained on data from  a generator with error rate $p$, pruned with an verification-type strategy  with parameters $(\phi,\psi)$, in the limit $N \to \infty$ it holds a.s that:

(i) 
If $p<p_\star$ then $acc(\widehat f_N) = 100\%$.

(ii) 
If $p>p_\star$ then  $acc(\widehat f_N) = 0\%$. The pruner is overwhelmed by so many inaccuracies in the synthesized data, and the downstream model learns the exact opposite of the true class labels.
\end{theorem}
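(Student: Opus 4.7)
My plan is to reduce the analysis to a deterministic population-level variational problem, exploit the balanced-mixture symmetry together with the pruner's Bernoulli structure to identify the \emph{direction} of the population minimizer, and finally invoke the high-dimensional concentration encoded in Condition~\ref{cond:main} to upgrade a directional statement to the $0\%/100\%$ dichotomy.

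\textbf{Step 1 (pass to the population).} Because of the ridge term, the objective $L(w)$ in \eqref{eq:objective} is $\lambda$-strongly convex, so $\widehat w_N$ lies in a deterministic ball of radius $O(1/\lambda)$. A standard uniform strong law of large numbers on that ball, combined with epi-convergence of convex functions, yields $\widehat w_N \to \bar w$ almost surely as $N\to\infty$, where $\bar w = \arg\min_w \bar L(w)$ with
\begin{equation*}
\bar L(w) \;:=\; \mathbb E\!\left[q\,\ell\bigl(\sigma(x^\top w),y'\bigr)\right] + \tfrac{\lambda}{2}\|w\|^2.
\end{equation*}
Conditioning on $(y,y')\in\{0,1\}^2$ and using the Bernoulli probabilities $p_{k\ell}$ from \eqref{eq:pkl_dummy} together with the distributional symmetry $(x\mid y=1)\stackrel{\mathrm d}{=} -(x\mid y=0)$ of the balanced mixture, the four cross-terms collapse to
\begin{equation*}
\bar L(w) \;=\; \alpha\,g(w) \;+\; \beta\,g(-w) \;+\; \tfrac{\lambda}{2}\|w\|^2, \qquad g(w) := \mathbb E_{x\mid y=0}\!\bigl[\log(1+e^{x^\top w})\bigr],
\end{equation*}
with $\alpha := (1-p)\phi$ and $\beta := p\psi$. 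The breakdown condition $p<p_\star$ is exactly $\alpha>\beta$, and $p>p_\star$ is $\alpha<\beta$.

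\textbf{Step 2 (direction of $\bar w$ flips at $\alpha=\beta$).} Writing $g = h_s + h_a$ for its even and odd parts, I get $\bar L(w) = (\alpha+\beta)h_s(w) + (\alpha-\beta)h_a(w) + \tfrac{\lambda}{2}\|w\|^2$. The term $(\alpha+\beta)h_s + \tfrac{\lambda}{2}\|\cdot\|^2$ is strictly convex, even, and uniquely minimized at $w=0$, so the antisymmetric term $(\alpha-\beta)h_a$ entirely determines the direction of the minimizer. A direct computation gives $\nabla h_a(0) = -\mu/2$, so this term pushes $w$ along $+\mu$ when $\alpha>\beta$ and along $-\mu$ when $\alpha<\beta$. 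By strict convexity of $\bar L$ together with the involution $w\mapsto -w$, which swaps $\alpha\leftrightarrow\beta$, we obtain $\bar w(\alpha,\beta) = -\bar w(\beta,\alpha)$. Using the rotational invariance of the mixture about the $\mu$-axis and the first-order condition
\begin{equation*}
\lambda \bar w \;=\; \mathbb E_{x\mid y=0}\!\Bigl[\bigl(\beta - (\alpha+\beta)\sigma(x^\top \bar w)\bigr)\,x\Bigr]
\end{equation*}
(massaged via a Gaussian Stein identity), one concludes that $\bar w$ is collinear with the Bayes direction $w_{\mathrm{Bayes}}\propto\Sigma^{-1}\mu$ with sign $\operatorname{sign}(\alpha-\beta)$.

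\textbf{Step 3 (from direction to accuracy) and main obstacle.} Once $\bar w$ is known to be a positive (resp. negative) multiple of $w_{\mathrm{Bayes}}$, Condition~\ref{cond:main} supplies, by design, the high-dimensional concentration that makes $f_{w_{\mathrm{Bayes}}}$ succeed (and $f_{-w_{\mathrm{Bayes}}}$ fail) almost surely in the limit, delivering the $100\%/0\%$ dichotomy in cases (i) and (ii). The main obstacle is the \emph{collinearity} claim in Step~2: it is easy to show $\bar w$ is in the correct half-space $\{w:\mu^\top w>0\}$, but pinning down that it actually lies on the Bayes ray — and not merely in its half-space, which for a non-isotropic $\Sigma$ is a strictly stronger statement — requires carefully combining the rotational symmetry of the mixture around the $\mu$-axis with Gaussian integration by parts in the KKT condition, and, for the more general Condition~\ref{cond:main} class, invoking its concentration inequalities to ensure that any residual deviation of $\bar w$ from the Bayes ray is invisible at the classifier's decision boundary.
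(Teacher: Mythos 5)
Your proposal takes a genuinely different route from the paper. The paper's proof never passes to a population objective at all: it works directly with the finite-sample KKT conditions, uses the inner-product concentration in Condition~\ref{cond:main} to show that $\widehat w_N = \sum_i q_i\alpha_i x_i$ with only four distinct values of $\alpha_i$ (indexed by $(y_i,y_i')$), derives a closed fixed-point system for these four scalars, reads off the empirical accuracy on the clean set $D_N$ as a ratio of the counts $N_{k\ell}$ weighted by indicators on the four scalars, and closes with binomial concentration of the $N_{k\ell}$. The threshold $p_\star = 1/(1+\psi/\phi)$ drops out of the inequality $N_{10}+N_{01} < N_{00}+N_{11}$. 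Your approach replaces this with a population-variational + symmetry argument, which is conceptually cleaner but runs into two real obstructions.

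First, your Step~1 ``collapse'' to $\bar L(w)=\alpha g(w)+\beta g(-w)+\tfrac{\lambda}{2}\|w\|^2$ silently assumes that $q$ is conditionally independent of $x$ given $(y,y')$. That is true for the oracle and for i.i.d.~$(\phi,\psi)$-coin pruners, but it fails precisely for the supervised pruner $q=\mathbf 1[y'(x^\top w_{\mathrm{prune}})>0]$ that the paper explicitly maps into the $(\phi,\psi)$ family and treats as the central example. In that case, conditioning on $q=1$ truncates the Gaussian mixture to a half-space, so the surviving features no longer have the law you are integrating against, and the four cross-terms do \emph{not} collapse to $g(\pm w)$ weighted by $\alpha,\beta$. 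The paper avoids this entirely: it keeps the sum over all $i$ (zeroing some with $q_i$) and applies Condition~\ref{cond:main} to the unconditional sample, so the inner-product estimates hold a fortiori for the surviving indices.

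Second, the ``collinear with the Bayes ray'' claim in Step~2 is, as you note, the crux — and for $\lambda>0$ with anisotropic $\Sigma$ it is simply false at the population level (a Stein-identity computation gives $\bar w \propto (\lambda I + c\,\Sigma)^{-1}\mu$, not $\Sigma^{-1}\mu$). What actually rescues the $100\%/0\%$ dichotomy is not collinearity but the high-dimensional geometry in Condition~\ref{cond:main}: the operator norm of $\Sigma$ must vanish for the claimed concentration $x_i^\top x_j \simeq a,b$, so in that regime any $w$ with positive $\mu$-alignment already agrees with $f_{\mathrm{Bayes}}$ on all but a vanishing fraction of mass. Your Step~3 gestures at this but conflates ``$f_{w_{\mathrm{Bayes}}}$ succeeds almost surely'' (true by definition) with the statement actually needed, namely that $f_{\bar w}$ agrees with $f_{\mathrm{Bayes}}$ on essentially all of the mixture's mass. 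To make your plan rigorous you would need to (a) restrict to pruners with $q\perp x\mid(y,y')$ (or re-derive the truncated population objective for the supervised case), (b) replace ``collinear with Bayes'' by ``positive $\mu$-alignment,'' and (c) carry out a high-dimensional argument (or appeal to Condition~\ref{cond:main}) to go from positive alignment to $100\%$ accuracy; you would also need a ULLN valid in the joint $N,d\to\infty$ regime, which is nontrivial and not addressed. The paper's combinatorial/KKT route is precisely engineered to sidestep all three difficulties.
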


\begin{wrapfigure}{r}{0.455\textwidth}
    \centering
    \vspace{-17pt}
        \includegraphics[width=0.8\linewidth]{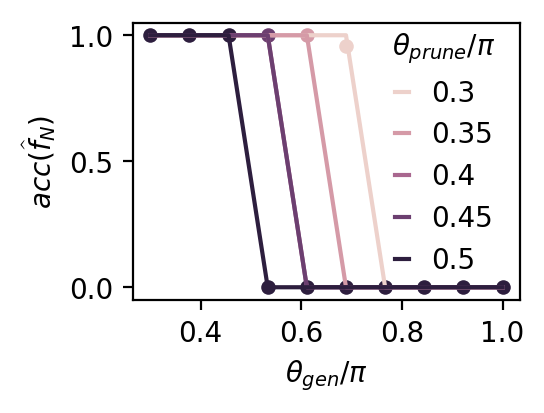}
        \vspace{-10pt}
   \vspace{-0.2cm}
    \caption{\small{\textbf{Empirical confirmation of Theorem \ref{thm:main_dumbeddown}.} Comparing the breakdown points of different generators and pruners of different strengths.
    Synthesized data is generated from a linear model $w_{gen}$ with classification error rate $p=\theta_{gen}/\pi \in [0,1]$. 
    The data is pruned with another linear model $w_{prune}$ which has classification error $\theta_{prune}/\pi$.
    Broken lines correspond to the prediction of Theorem \ref{thm:main_dumbeddown}, while solid points correspond to experiments.
    Notice the sharp phase transitions where the model suddenly switches from perfect accuracy to worse-than-chance, as the theorem predicts. }}
    \label{fig:curves_main}
    \vspace{-45pt}
\end{wrapfigure}

Thus, there is a sharp phase-transition around the corruption level $p_\star := 1/(1+\psi/\phi)$: as $p$ is increased past level $p_\star$, the downstream model $\widehat f_N$ abruptly switches from being perfectly accurate, to perfectly inaccurate! The proof (see Appendix \ref{app:sketch} for a sketch) explicitly computes empirical test accuracy in terms of $N_{k\ell} := \sum_{i=1}^N z_{ik\ell}$, which follow a binomial distribution, bounding the gap to the population accuracy, and using concentration of measure type techniques.
Note that the sharp transition is due to the infinite-sample regime, where we can avoid finite-sample corrections. $p_*$ could be used as a measurable proxy in the experiment.

See Figure \ref{fig:curves_main} (and Figure \ref{fig:curves} in Appendix \ref{app:theory}) for an empirical illustration of the theorem.

\begin{remark}
Note that the $100\%$ accuracy achievable in Theorem \ref{thm:main_dumbeddown} is idealized, and is expected to only hold in the infinite sample regime (with a possibly large but fixed input dimension).
\end{remark}

\subsection{Some Consequences of Theorem \ref{thm:main_dumbeddown}}\label{sec:thm-interpret}
We now present some illustrious applications of Theorem \ref{thm:main}. These examples are empirically confirmed in Figure \ref{fig:curves_main} (see also Figure \ref{fig:curves} in Appendix \ref{app:theory}).

\paragraph{No Pruning.}
Here, we have $\psi/\phi=1$ and so the downstream model achieves $100\%$ accuracy for all values of corruption parameter $p$ up to the breakdown point $p_\star = 1/2$ predicted by Theorem \ref{thm:main} .

\paragraph{Oracle Pruning.} For this scenario, $\psi/\phi=0$ and  so Theorem \ref{thm:main} predicts that the downstream model $\widehat f_N$ achieves $100\%$ accuracy for all values of corruption parameter $p$ up to the breakdown point $p_\star = 1$. This is perhaps not so surprising in hindsight. The point is that even for moderately large values of $\psi/\phi$, the breakdown point $p_\star$ can still be quite close to $1$.

\paragraph{Supervised Pruning.}
Consider isotropic Gaussian mixture data with means $\pm \mu$, and a pruning strategy as in Eq.~(\ref{eq:pruner}). The parameters $(\phi,\psi)$ only depend on the angles $\theta_{gen},\theta_{prune},\theta \in [0,\pi]$: 
\begin{eqnarray}
\label{eq:principal-angles}
\begin{split}
\theta_{gen}&:=\angle (w_{gen},\mu),\, \theta_{prune} := \angle (w_{prune},\mu),\\
\theta &:= \angle (w_{prune},w_{gen}).
\end{split}
\end{eqnarray}
This is because, the $p_{k\ell}$'s defined in \eqref{eq:pkl_dummy} now correspond to \emph{orthant probabilities} for a trivariate normal distribution, with correlation coefficients given by these angles (see also Figure \ref{fig:curves_main}). In the Appendix \ref{app:consequences}, we provide the calculation from the angles to $\psi$ and $\phi$. In practice, we can directly measure the proxy $p_*$, which encapsulates all the aforementioned correlations.

\paragraph{Decoupling the Generator and Verifier.} Although the generator and verifier are coupled together in supervised pruning, there are some intuitions that help us decouple them: (1) a better generator always improves performance, (2) when the verifier is poor, such as in cases of no pruning or random pruning, we have a low breakdown point and require a good generator to achieve good performance, and (3) when the verifier is sufficiently good, close to an oracle, the breakdown point is high, and any non-degenerate generator is sufficient.

\vspace{-5pt}
\section{Simulations on Synthesized Data}\label{sec:toy}


The theoretical results are based on the best-case scenario of having unlimited access to synthesized data and operating in the high-dimensional limit. In this framework, the impact of generator and verifier on performance is reflected in binary outcomes: 100\% or 0\%. In this section, we present simulation results from finite regimes to illustrate the practical implications of our theoretical framework. Specifically, we explore how the generator and verifier influence performance, and how performance scales with an increasing, yet finite, volume of synthesized data.

\subsection{Setting}


Following the theoretical setting, we consider the same Gaussian mixture and linear models for the generator and selector
. Let $w_*$ be a fixed unit vector in $\mathbb{R}^d$. The distribution $P_{orig}$ is
$$
x|y \sim N(y \tau w_*, I_d /d), \quad \text{for } y \in [-1, +1].
$$ Here, $\tau$ is a positive scalar that controls the overlap.

\paragraph{Synthesized Data Generation and Verification.} We sample $N_0$ data points from the distribution $P_{\text{orig}}$ to form the original dataset, which is used to train a linear model, $\hat{w}_{N_0}$, employing ordinary least squares. We then generate a large synthesized dataset using the trained model $\hat{w}_{N_0}$ with a sigmoid function, which undergoes selection by various verifiers into sets $D^{\text{gen}, N_0}_{ \text{vrf}}$. In our study, these verifiers are linear models parameterized by various $w_{\theta_{prune}}$, where $\theta_{prune}$ denotes the angle between the pruner and the ground truth $w_*$. 5Having the verified synthesized data, 
We then randomly select $n'$ data from $D^{\text{gen}, N_0}_{ \text{vrf}}$ to train and evaluate the final models. 

\subsection{Lessons Learned}

In Figure \ref{fig:simulation}, we conduct several simulations varying $N_0$, $\theta_{prune}$, and $n'$. A larger $N_0$ indicates a more effectively trained generator, utilizing a greater amount of original data. $\theta_{prune}$ determines the verifier's quality, with $\theta_{prune} = 0$ corresponding to the use of the ground truth as an oracle verifier. We further explore scenarios where $\theta_{prune} = \frac{\pi}{24}$ and $\frac{\pi}{12}$. The variable $n'$ represents the number of synthesized samples used to train the final model. We plot the scaling curves as $n'$ approaches infinity, a scenario analyzed by our theory. The ``random" line represents the process of randomly selecting $n'$ data points from the synthesized dataset without any verification. The ``clean" line indicates training of the model using $n'$ data points from the original distribution. We make the following observations:



\begin{figure}[tb]
\vspace{-20pt}
    \centering
    \includegraphics[width=\linewidth]{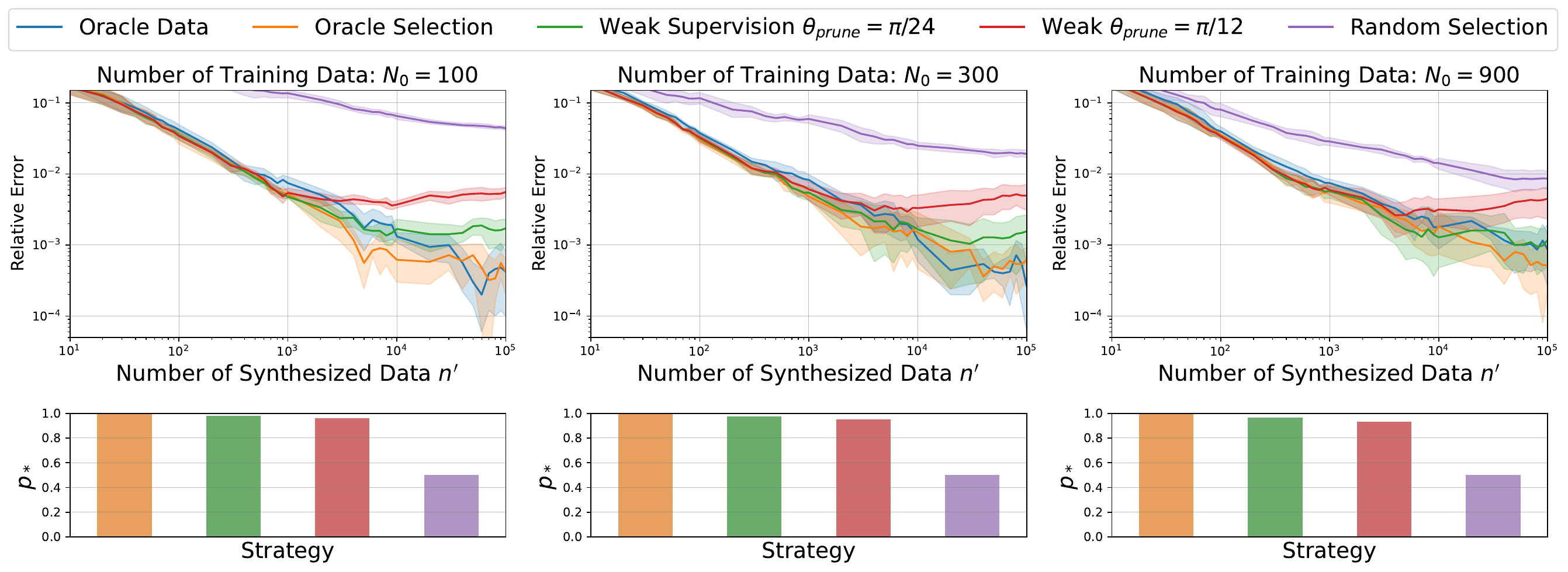}
    \caption{\small{{\bf{Simulations with Gaussian mixtures. (Top row)}} Relative error (accuracy relative to optimal accuracy) scaling as a function number of selected data, $n'$, used to train the model. $\tau=0.15, N_1=10^6$. The Bayes optimal classifier achieves approximately 94\% accuracy on this distribution. {\bf (Bottom row)}  $p_*$ values for all settings.}}
    \label{fig:simulation}
    \vspace{-8pt}
\end{figure}


\textbf{Oracle Verification Matches Training with Oracle Labels.} The oracle verifier achieves the best performance, matching training with clean data across all settings and attaining Bayes optimal accuracy. This validates the theory that synthesized data has the full information which can be extracted with verification.

\textbf{The Effectiveness of a Weak Verifier Depends on $n'$.} In practical scenarios, obtaining oracle-level verification is often challenging, so weaker forms of verification are typically employed. While a weak verifier generally leads to poorer performance, consistent with the decaying threshold points predicted by theory, there is a surprising ``sweet spot" at a certain data size (for the curves with $\theta_{prune}=\frac{\pi}{12}$). Training around 10,000 data outperforms training with more data. Therefore, the choice of verifier should also take into account the quantity of data selected. 

\textbf{$p_*$ is strongly correlated with the final performance.} Across all settings, the relative values of $p_*$ consistently align with the models' performance. The simulations provide practical validation of the theoretical results and are consistent with the interpretation presented in Section \ref{sec:thm-interpret}. Oracle supervision yields optimal performance, and, in general, improving both the generator and verifier leads to enhanced results. 



\vspace{-5pt}
\section{Experiments}
\label{sec:exp}

In both the theoretical results and the simulations, we examine a classification scenario where oracle supervision can select ``100\% correct" synthesized data. However, outside of these controlled settings, synthesized data may only approximate the ground truth, meaning the selected data might be closer to the truth rather than exactly correct. In this section, we explore verification for more general tasks using generative models through two experiments: (1) training a transformer to predict the eigenvalues of a matrix, and (2) fine-tuning Llama-2-7B on a news summarization task. Table \ref{tab:settings} provides an overview of how we gradually relax the setting from theoretical to empirical. For the purposes of our discussion here, we refer to model collapse whenever the performance of the model trained on synthetic data is worse than that of the original generator.



\begin{table}[t]
\small
    \centering
    \caption{\small{\textbf{Implementation of our three experiments}: We progressively explore our insights through three real-world experiments. First, we conduct simulations in a finite-data regime, where all settings align with theoretical expectations. Next, we examine transformers trained on generation tasks, evaluated using a 0-1 metric. Finally, we analyze large language models with general metrics.}}
    \setlength{\tabcolsep}{3.6pt}
    \begin{tabular}{l|ccccc}
    \toprule
        Settings & Data & Task & Model & Verifiers & Metric \\ \midrule
        Simulation & Gaussian mixture & classification & linear model & linear models & classification accuracy \\ 
        Arithmetic & synthesized data & generation & transformer & (noisy) verifiers & accuracy w. tolerance \\ 
        Summarization & XLSUM dataset & generation & Llama & Llamas & similarity w. Rouge-1 \\ 
    \bottomrule
    \end{tabular}
    \label{tab:settings}
    \vspace{-10pt}
\end{table}


        

\vspace{-5pt}
\subsection{Transformer for Arithmetic Tasks}
\label{subsec:exp-math}

Recall the setting described in Section \ref{sec:warmup}, where a transformer is trained in a generative manner using 200K samples of matrices and their corresponding eigenvalues. This setting allows for the synthesis of unlimited data by generating random matrices and using the generator to predict their eigenvalues. 

We first introduce the oracle \textbf{verifier} that measures the relative $L^1$ distance between the model's predictions and the correct solutions. We use greedy decoding to generate a prediction for each matrix, and only matrices with predictions within a tolerance of $\tau = 1\%$ (as determined by the oracle verifier) are retained. Additionally, we introduce a set of noisy verifiers, where data with predicted solutions exceeding the $1\%$ tolerance are still included with a probability of $p_{\text{noise}}$ \footnote{This is equivalent to introducing random noise in the selection of both correct and incorrect data.}. Since we are evaluating accuracy in a binary (0/1) manner for each data point, we can directly compute $\psi$ and $\phi$ in Equation \ref{equ:psiphi}, with $y' = y$ replaced by being within $1\%$ relative $L^1$ tolerance. The noisy verifier corresponds to setting $\phi = 1$ and $\psi = p_{\text{noise}}$. Consequently, according to our theory, the proxy is given by $p_* = 1 / (1 + p_{\text{noise}})$. For perfect oracle verifier, $p_{\text{noise}}=0$ and $p_*=1$; for random selection without verification, $p_{\text{noise}}=1$ and $p_*=0.5$.

We generate various verified synthesized datasets, with data sizes ranging from 1 million to 5 million, and up to 10 million samples. Using these datasets, the transformer is trained from scratch and evaluated with greedy decoding. We report how different verifiers, specifically different proxy values $p_*$, and varying dataset sizes affect the final performance in Figure \ref{fig:math_fpr}. We observe the following:

        

\begin{wrapfigure}{r}{0.4\textwidth}
    \centering
    \vspace{-2pt}
        \includegraphics[width=1\linewidth]{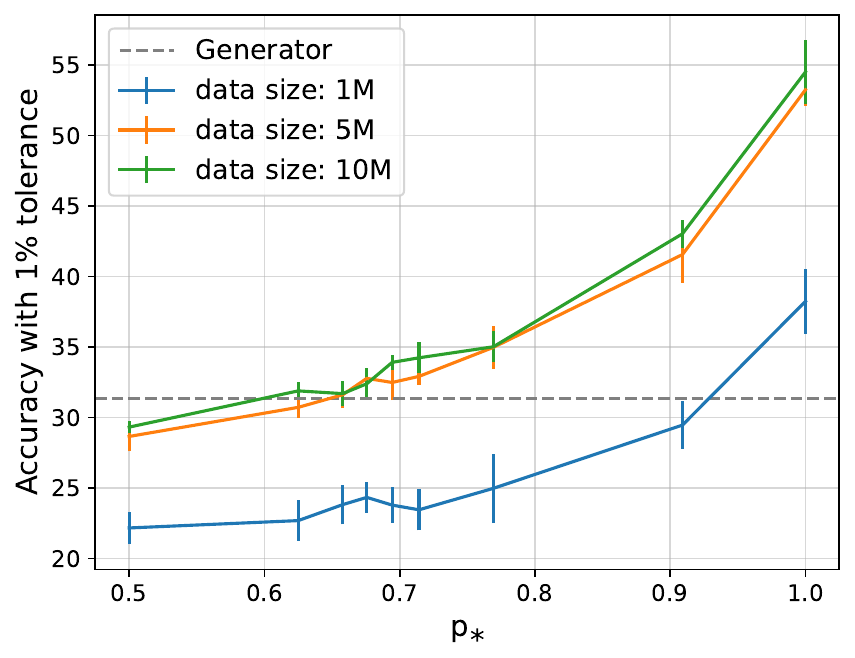}
         \vspace{-0.8cm}
    \caption{\small{{\bf Transformers computing eigenvalues.} Correlation between accuracy with $1\%$ tolerance, $p_*$, and the number of synthesized data. Model collapse is observed without verification ($p_* = 0.5$), while higher values of $p_*$ result in improved performance. Results are averaged over 5 seeds.}}
    \label{fig:math_fpr}
    \vspace{-20pt}
\end{wrapfigure}

\textbf{Model Collapse is Observed.} When using synthesized data without any verification, at $p_* = 0.5$, even with a significantly larger dataset — 10 million samples, or 50 times the size of the generator's original training set — the trained model performs worse than the generator itself (represented by the dashed line). This outcome indicates the occurrence of model collapse.

\textbf{$p_*$ as a Reliable Proxy for Final Performance.} When leveraging verification, we observe a consistent increase in performance. With the oracle verifier ($p_* = 1$), the accuracy nearly doubles. This observation aligns with practices in code generation and mathematics \citep{haluptzok2022language, trinh2024solving}, where a natural verifier, such as precomputed solutions, unit tests, or a compiler, is available. Across all dataset sizes, $p_*$ shows a strong correlation with the final performance. Furthermore, using 10 million samples approximates the effect of having infinite synthetic data; the turning point on the green curve, which surpasses the generator's performance, occurs around 0.65 in Figure \ref{fig:math_fpr}. This value is close to the generator's error rate (i.e., 1 minus the accuracy, represented by the dashed line)  and reflects the phase transition predicted by our theory.

\vspace{-5pt}
\subsection{LLMs for News Summarization}\label{sec:sub-news}
We now turn to one of the most standard tasks in NLP:  news summarization. We utilize the English summarization subset of the XLSUM dataset \citep{hasan-etal-2021-xl}, the largest publicly available summarization dataset, consisting of 307,000 training samples and 11,500 test samples. Each sample in this dataset contains a news article $x$ paired with a professionally annotated summary $y$. Unlike in the previous cases, we no longer have infinite $(x, y)$ pairs due to the finite number of news articles available. Therefore, we fine-tune a Llama-2-7B model \citep{touvron2023llama} on only $12.5\%$ of the training set to serve as the generator. The synthesized data (news summaries) is generated using the articles of the entire training set with greedy decoding. This approach reflects real-world conditions where synthetic data generation can significantly outpace human annotation. 

In this setting, we use the Rouge-1 score \citep{lin2004rouge} to evaluate the generated summary $y'$. Rouge-1 assesses the quality of the generated summary by measuring the overlap of individual words between the generated and human-written summaries $y$. Similar to other tasks in natural language processing (NLP), we only have a similarity measure between $y'$ and $y$, rather than a binary 0/1 measure.  In Appendix \ref{sec:app_gen_proxy}, we generalize the definitions of $\psi$ and $\phi$ for these metrics. The core idea is to extend $y' = y$ to a similarity score in the range  $[0, 1]$.



We consider three selection strategies: (1) \textbf{Selection with Oracle}: We calculate the Rouge score between the generated summary and the ground truth summary, keeping the data with the highest scores; (2) \textbf{Selection with Weak Supervision}: We leverage a fine-tuned Llama-3 model with higher Rouge score than the generator and keep the data with the lowest perplexity; (3) \textbf{Self-Selection}: We use the generator to keep the data with the lowest perplexity. We apply three selection rates: $12.5\%$ (when the selected synthesized data is the same size as the original data used to train the generator), $25\%$, and $50\%$. For each combination, we collect the selected synthesized data to finetune the Llama-2 model. We present scaling law curves that illustrate how the Rouge-1 score improves with increasing amount of selected synthesized data used for training. Throughout the experiments, all finetuning was performed with full parameter training. Details are provided in \cref{sec:news_append}. 

The results and $p_*$ are shown in Figure \ref{fig:llama2_new}. We observe the following:

\textbf{Model Collapse is Observed.} The Random Selection curve represent training with synthesized data directly without verification. In Figure \ref{fig:llama2_new} \textbf{Left}, using the same amount of synthesized data results in worse performance compared to using the original data, indicating model collapse (comparing `Random Selection' with `Generator'). Only with more data, the Random Selection lines improve and nearly match the performance of the generator. 

\textbf{Selection by Oracle Performs Best.} The dataset curated with oracle selection surpasses the performance of the generator in all settings. Oracle selection with  12.5\% of the data selected even surpasses the model trained with 100\% of the training set and original labels. The oracle verifier consistently has the best $p_*$ compared with the other methods.

\textbf{$p_*$ Strongly Correlates with Performance.} Surprisingly, self-selection leads to better performance than the generator, while Llama-3 verification results in performance similar to random selection. These findings seem counterintuitive, {given that Llama-3 actually achieves a better Rouge-1 score}. However, they are consistent with the measured $p_*$. Intuitively, self-selection tends to favor easy-to-learn samples, resulting in good performance with fewer data points. Although Llama-3 can generate better summaries, the synthesized data are produced by Llama-2, which is less correlated with Llama-3. As discussed in Equation \ref{eq:principal-angles}, $p_*$ depends on three correlations and Llama-3 is not necessarily a better verifier in this context. This comparison highlights the challenges of selecting an appropriate verifier when a fixed oracle, like those used in code or math tasks, is not available. In such cases, our proposed $p_*$ can serve as a valuable proxy before any training is conducted.


\begin{figure}[tb]
\vspace{-20pt}
    \centering
    \includegraphics[width=\linewidth]{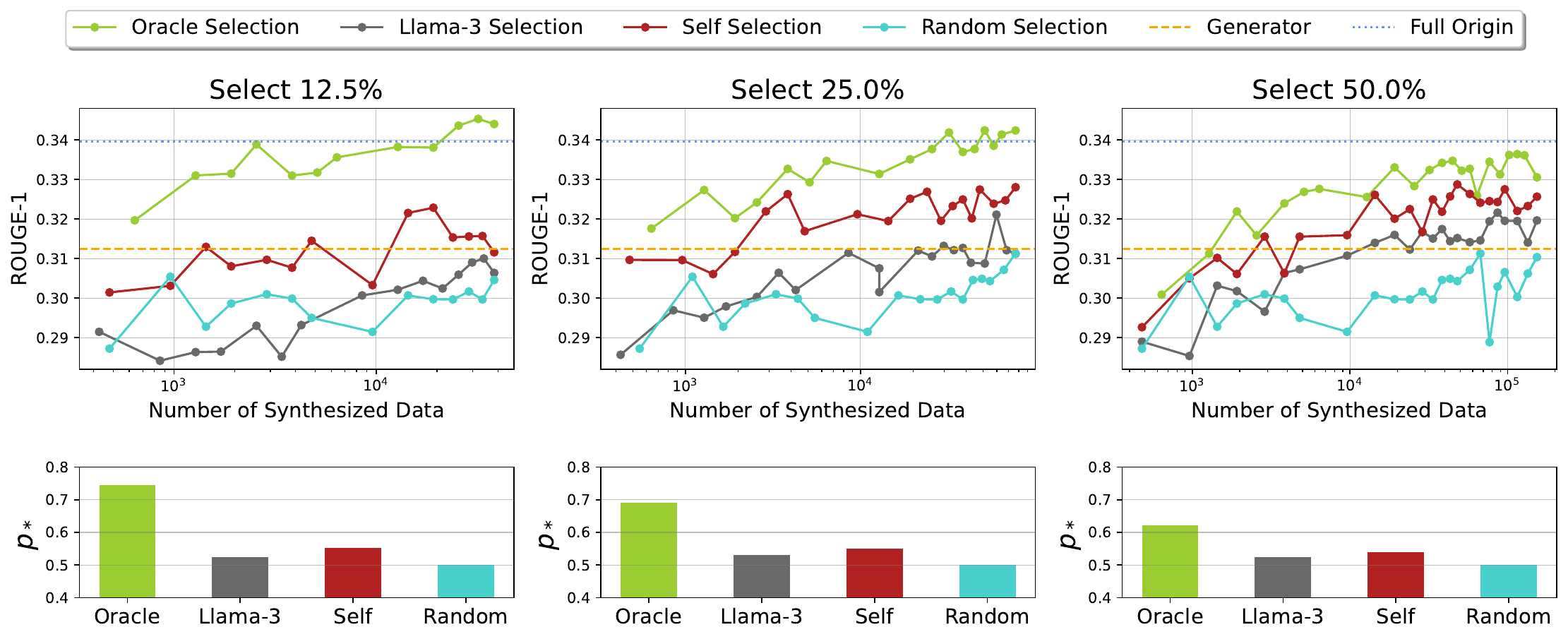}
    \caption{\small{{\bf News summarization with LLMs.} \textbf{(Top Row)} The three figures from left to right represents models trained on 12.5\%, 25\%, and 50\% of the selected data individually. Each figure includes four curves illustrating different training scenarios: (1) selection with oracle, (2) selection with Llama-3 as a weak supervision, (3) self-selection, and (4) random selection. Additionally, two horizontal lines are included for comparison: one representing the generator model and the other representing a model trained with 100\% data with original labels. \textbf{(Bottom row)} Computed values of $p_*$ for the verifiers for corresponding proportions of selected data. Training with data selected by a verifier with higher $p_*$ achieves better performance. }}
    \label{fig:llama2_new}
    \vspace{-15pt}
\end{figure}





\vspace{-5pt}
\section{Discussion and Limitations}
\label{sec:conc}


In this paper, we consider a novel problem related to synthesized data: how to prevent model collapse through data selection. We propose to leverage a verifier to improve the synthesized data. We emphasize it is crucial to focus not only on the quality of the generator but also on having a high-quality verifier to select the data. We theoretically show that verification is all you need for synthesized data and identify a proxy function for performance after data selection. Through three solid experiments, we demonstrate that a decent selector can prevent model collapse and our proxy function is a reliable measure. Our work is of significant theoretical and practical importance in the era of large models with increasing use of generated data.

Can a model improve itself? 
In our paper, we present results showing that a math transformer using beam search does not improve test accuracy, while Llama-2, through self-selection of its generated data, can yield a model that performs better than the original generator. In the first experiment, the model selects a better $y$ (output) for each $x$ (input) (label selection). However, at the distribution level, the selected $y$ are not better. In the second experiment, the model selects better $(x, y)$ pairs from a larger pool of news articles ($x$) than available in the generator's training set. This introduces new information through the $x$ and results in a shift in the distribution of $x$. Due to this distribution shift, the trained model can outperform the original generator even with self-selection.

One limitation of this study is that we only considered data selection as a means of data curation. Besides data selection, data curation also includes methods such as data augmentation, data regeneration, and weighting. The exploration of general data curation methods to avoid model collapse is left for future work. 

\section*{Acknowledgements}

YF and JK acknowledge support through NSF NRT training grant award 1922658. Part of this work was done while JK and YF  were hosted by the Centre Sciences de Donnees (CSD) at the École Normale Supérieure (ENS) in 2023/24, and JK and YF would like to thank CSD and ENS for their hospitality. YF and PY would like to thank Yanzhu Guo, Di He, Zhenyu He for
discussions and suggestions.
This work was supported in part through the NYU IT High Performance Computing resources, services, and staff expertise.

\clearpage

\bibliography{main}

\clearpage

\clearpage

\appendix 
\section{More Works on Synthesized Data}\label{sec:related_append}

\subsection{Taxonomy for Synthesized Data}\label{sec:taxonomy}

\newcommand{\typeone}{\textcolor{cyan}{\ding{108}}}
\newcommand{\typetwo}{\textcolor{magenta}{\ding{115}}}
\newcommand{\typethree}{\textcolor{orange}{\ding{110}}}
\newcommand{\typefour}{\textcolor{violet}{\ding{70}}}

Contrary to the phenomenon of model collapse, synthesized data has been shown to improve performance in numerous empirical studies. We now provide a taxonomy outlining when and how synthesized data is beneficial. Specifically, we identify four key components: \textit{prompt engineering} \typeone, \textit{knowledge from advanced models} \typetwo, \textit{distributional shift and targeted curation} \typethree, and \textit{external verifiers} \typefour. Most empirical studies can be categorized based on one or more of these components. We use \typeone \typetwo \typethree and \typefour  to denote the components each reference leverages.

\textbf{Code and Math.} \cite{haluptzok2022language} \typefour generate synthesized data for codes and use a verifier to filter and show that the model can "self-improve" with its own synthesized data. \cite{gunasekar2023textbooks} \typeone \typethree filter high-quality data from the web and prompt GPT-3.5 with a specially curated prompt set covering both quality and diversity. \cite{wei2023magicoder} \typeone leverage a diverse and large set of open-source code snippets to curate code instructions as prompts with good coverage and high quality. \cite{zheng2024opencodeinterpreter, trinh2024solving} \typefour leverage a symbolic deduction engine as a verifier to test the correctness of each branch for solving Olympic geometry problems.

\textbf{Alignment.} During standard fine-tunings, synthesized data is often generated by a stronger model like GPT-4 \citep{peng2023instruction} \typetwo. \cite{wang2023self} \typeone \typefour use a good set of prompts and inputs with a heuristic verifier to filter out low-quality ones and maintain high diversity. \cite{bai2022constitutional} \typeone \typethree use the model itself to critique whether its own generation is harmful, given already harmful prompts with gold standards from humans. For alignment with reinforcement learning, \cite{ouyang2022training} \typeone \typefour use humans as verifiers to compare synthesized data generated by the current model with a good set of prompts. Some papers propose reinforcement learning with AI feedback (RLAIF) \citep{lee2023rlaif} \typeone \typethree that leverages another LLM as the verifier to match human verification. The verifier is a stronger model, instruct-tuned Palm2 L, while the network being trained is the Palm2 XS. However, \cite{yang2024rlcd} \typeone later found that using better prompts (self-improve) that direct harmful or harmless responses can surpass RLAIF. \cite{yuan2024self} \typeone achieve surprising results with iterative fine-tuning and generating good prompts with in-context learning.

\textbf{Knowledge distillation.} Most papers in the knowledge distillation area involve using a better model to distill for general performance or specific tasks, with \typeone, \typetwo, and \typethree involved from case to case. One example is the tiny story cases \citep{eldan2023tinystories} \typeone \typetwo, where GPT-4 is prompted to generate stories for four-year-olds that are used to train GPT-Neo with good performance.

\textbf{Image Domain.} \cite{kirillov2023segment} and \cite{li2022blip} \typethree use a distributional shift from high-quality to low-quality data to label and curate a vast amount of unlabeled data. Specifically, \cite{li2022blip} also trains a verifier to filters high-quality data. \citep{um2024dont} \typetwo \typethree specifically curate minority groups with a diffusion model to enhance performance. \cite{he2023is, dunlap2024diversify} \typetwo \typethree generate synthesized data that aids in classification tasks by tailoring the synthesized data to match the model's learning objectives. \cite{azizi2023synthetic, hemmat2023feedback} \typetwo \typethree employ guided curation (with supervision) to curate data from diffusion models. \cite{burg2023image} find that while synthesized data from a diffusion model helps improve downstream tasks, such as classification, using the pre-training data of the diffusion model alone gives even stronger performance.

\subsection{Knowledge Distillation with Soft Labels}

Related to synthesized data, there is a long history of using synthesized labels in image classifications. In the domains of self-distillation and knowledge distillation \citep{hinton2015distilling, furlanello2018born}, data with soft labels generated from the teacher model can significantly improve the performance of the student model. These soft labels convey additional insights—referred to as 'dark knowledge'—that have been theoretically linked to specific advantageous adaptations. These include implicit biases that mitigate overfitting \citep{mobahi2020self}, mimicry of early stopping \citep{dong2019distillation} for improved optimization under label noise \citep{das2023understanding}, and adjustments to accommodate specific data structures \citep{allen2022towards}. We only consider synthesized data with fixed labels as in the current practice of LLMs and diffusion models.

\subsection{Data Selection}\label{sec:related_append_3}

Comprehensive surveys on data selection for language models can be found in \cite{albalak2024survey}, along with theoretical studies on selection in high-dimensional settings \citep{sorscher2022beyond, kolossov2024towards}. Specifically, \cite{kolossov2024towards} also explore the use of surrogate models for producing labels during selection, followed by curation of the original labels. In our study, selection is applied to synthesized data where original labels are not available, resulting in distinct phenomena compared to these approaches on original data.

\section{Predicting the Eigenvalues} \label{sec:append-math}

We leverage the code base provided by \cite{charton2022linear} at \url{https://github.com/facebookresearch/LAWT} under the license CC BY-NC 4.0.

\textbf{Input and Tokenization.} Transformers are trained to predict the eigenvalues of $5\times 5$ symmetric real matrices. Model inputs are sequences of $25$ real entries, rounded to three significant digits, and tokenized as triplets of signs( \texttt{+} or \texttt{-}), mantissas (from \texttt{0} to \texttt{999}) and power of ten exponents (from \texttt{E-100} to \texttt{E100}). For instance, the $2 \times 2$ matrix, 
$$\begin{pmatrix} 2.3& 0.6035\\ 0.6035&  -3.141\end{pmatrix} $$
will be encoded as the sequence of $12$ tokens: \texttt{+ 23 E-1 + 604 E-3 + 604 E-3 - 314 E-2}. Model outputs are vectors of $5$ real eigenvalues, rounded to three significant digits, and tokenized as before, as triplets of sign, mantissa and exponent (the \texttt{P1000} encoding from \citep{charton2022linear}).

\textbf{Model and Optimization.} We train sequence-to-sequence transformers \citep{transformer17}, with $4$ layers in the encoder, and one in the decoder, $512$ dimensions and $8$ attention heads, to minimize a cross-entropy loss, using the Adam optimizer \citep{kingma2014adam}, with a fixed learning rate of $5\cdot 10^{-5}$, after an initial linear warm-up phase over the first 10,000 optimization steps. The model is trained for 400 epochs before overfitting.

\textbf{Evalution.} Model accuracies are measured on a held-out test set of examples not seen at training. 
Model predictions are evaluated by decoding the output sequence as a vector of $5$ real numbers $(p_1, p_2, p_3, p_4, p_5)$, and assessing that a prediction $\textbf p(p_1, p_2, p_3, p_4, p_5)$ of eigenvalues $\textbf{v}(v_1,v_2,v_3,v_4,v_5)$ is correct if the relative error in $L^1$ norms is below some tolerance $\tau$, i.e. if $$\sum_{i=1}^{5}|v_i - p_i| < \tau \sum_{i=1}^{5}|v_i|.$$
We use tolerances $\tau$ of $5, 2, 1$ and $0.5\%$.

\subsection{Finetuning Models with Synthesized Data}

In all previous experiments, the data generated from the generator (using beam or reject sampling) were used to train a new model. In this section, we consider using data generated from the generator to finetune models pre-trained on a small sample of ground truth data. We consider four cases: 

\begin{itemize}
\item Fine-tuning the generator (Model A).
\item Fine-tuning a model pre-trained to the same accuracy as the generator ($62\%$, Model B).
\item Fine-tuning a model pre-trained to higher accuracy ($93\%$, Model C).
\item Fine-tuning a model pre-trained to low accuracy ($4\%$, Model D).
\end{itemize}

\begin{table}[htb]
    \small
    \centering
    \begin{tabular}{l|ccccc}
        \toprule 
        & Model A (66\%) & Model B (62\%) & Model C (93\%) & Model D (4\%) & From scratch\\
        \midrule
         Rejection & 61.8 & 72.9 & 82.1 & 66.3 & 72.1 \\
         Beam 50  & 74.1 & 82.6 & 87.3 & 78.3 & 84.0 \\
         Beam 35 & 72.7 & 81.3 & 86.8 & 76.8 & 80.4\\
         Beam 25 & 71.3 & 79.8 & 84.4 & 73.3 & 79.9 \\
         Beam 10 & 67.5 & 75.1 & 83.5 & 68.0 & 73.9 \\
         Beam 5 & 64.9 & 70.8 & 80.1 & 65.6 & 69.1 \\
         Beam 1 & 61.6 & 62.1 & 75.6 & 55.8 & 60.5 \\
         \bottomrule
    \end{tabular}
    \caption{\small \textbf{Performance of models fine-tuned on 1M examples generated by the generator.} $\tau=2\%$}
    
   \label{tab:finetuning}
\end{table}

Table~\ref{tab:finetuning} compares accuracy of the four fine-tuning cases to that of a model trained from scratch. Fine-tuning only achiueves better performance when the pre-teained model achieved higher accuracy than model A. In all other cases, fine-tuning brings no improvement. Note that fine-tuning model A on its own generated data achieves the worst result, a clear case of model collapse.

\subsection{Computational Resources}

We leverage a V100 GPU with 32GB of memory for all experiments involving linear algebra. The training time ranges from 1 to 5 days, depending on the data size and the number of epochs. 

\section{Generalization of the Proxy} \label{sec:app_gen_proxy}
Recall that in our theoretical setting, we define all variables in a two-way classification problem, with $y$ as the ground truth label and $y'$ as the predicted synthesized label. We introduce an indicator variable $s$, defined as:
\[
s = 
\begin{cases} 
    1 & \text{if } y = y' \\
    0 & \text{if } y \neq y'
\end{cases}.
\]
We can rewrite all formulas in terms of $s$:
\[
p := 1 - \mathbb{E}[s],
\]
and
\[
\phi = \mathbb{P}(q=1 \mid s=1), \quad \psi = \mathbb{P}(q=1 \mid s=0).
\]

When moving beyond the classification setting, $s$ is no longer a binary 0/1 variable. In more general cases, $s$ represents a measure of similarity or distance between the original target $y$ and the synthesized target $y'$. Without loss of generality, we assume $s$ is normalized to lie within the range $[0, 1]$. For instance, in the news summarization experiment, $s$ corresponds to the ROUGE-1 score between the ground truth text $y$ and the synthesized text $y'$, and equals 1 if and only if $y = y'$.

With this generalized measure $s$, we can extend the previous definitions of $\phi$ and $\psi$ as follows:
\[
\phi = \frac{\mathbb{E}_s [qs]}{1 - p}, \quad \psi = \frac{\mathbb{E}_s [q(1 - s)]}{p}.
\]
When $s$ takes values in $\{0, 1\}$, these definitions are consistent with those in Equation \ref{equ:psiphi}.

\section{News Summarization}\label{sec:news_append}

We leverage the XLSUM dataset \citep{hasan-etal-2021-xl} at \url{https://huggingface.co/datasets/csebuetnlp/xlsum} under the license CC-BY-NC-SA 4.0. 

\textbf{Data preprocessing.} For each data in both training and test dataset, it consists of a news report and a summarization, denoted as (\textit{news}, \textit{summarization}). We write each data in the following form:
\begin{tcolorbox}[colback=blue!10, colframe=black, boxrule=1pt]
Article: \textit{news}. A summary of the article: \textit{summarization}. 
\end{tcolorbox}

\textbf{Fine-tuning and generating details.} Throughout all phases of evaluation and generation, we employ greedy decoding. Given that news summarization is a low-entropy task, greedy decoding is chosen to ensure quality generation. Consistent with common practice, fine-tuning is limited to a single epoch. Through out the experiments, all the finetuning is with full parameter tuning to better capture the scaling law as suggested in \cite{zhang2024when}. 

\textbf{Implementation details.} We leverage the official implementation in Huggingface \footnote{\url{https://github.com/huggingface/transformers/blob/main/examples/pytorch/language-modeling/run_clm.py}} for training, under the license Apache 2.0. Specifically, for training the generator, we start our training with the pre-trained Llama-2, and set the learning rate to 5e-5, the learning rate scheduler as `cosine', the number of epochs to 1, the total batch size to 32, the block size to 1024 and the others to the default value. For generating the synthesized data, we use greedy strategy to generate a summarization for each news in the training set. For training based on the selected synthesized data, we also start our training with the pre-trained Llama-2, and set the learning rate to 2e-5, the learning rate scheduler as `constant' and the others to the same. For evaluation, we first use greedy strategy to generate a summarization for each news in the test set, and then calculate the Rouge-1 score between the generated summarization and the corresponding ground truth, and finally report the average of the Rouge-1 scores of all test data. When calculating the perplexity, we only calculate the perplexity for the generated summary. When fine-tuning the Llama-3 model, we use the full XLSUM dataset to achieve good performance. The resulting model achieves a Rouge-1 score of 34.5.

\textbf{Computational Resources.} All experiments were conducted using a dedicated computational cluster equipped with 4 NVIDIA A800 GPUs, each with 80 GB of memory. Our training and inference processes are performed on the cluster.

\textbf{Estimated Time.} Training the whole dataset for an epoch takes about 6 hours. Generating the whole dataset takes about 1 day. During evaluation, we need to first generate and calculate the rouge score, which takes around 40 minutes for one checkpoint.

\section{A General Theory of Pruning with Verification}\label{app:theory}

In Section \ref{sec:theory_main_paper} we have presented a special case of our general theory, which we describe here in more generality and detail. While some of our exposition here overlaps with Section \ref{sec:theory_main_paper}, we prefer to leave it as a complete text that provides a stand-alone overview.

\subsection{Data Distribution} 
Consider a probability distribution $P$ over $\mathbb R^d \times \{0,1\}$ with the following high-dimensional property
\begin{condition}
\label{cond:main}
Given $N \le N(d)$ i.i.d. samples $(x_1,y_1),\ldots,(x_N,y_N)$ from $P$ with $N \le N(d)$, the following hold estimates w.p $1-o(1)$ uniformly on all $i,j \in [N]$, in the limit $d \to \infty$
\begin{align*}
    \|x_i\|^2 &\simeq 1\label{eq:unit-norm},\\
    x_i^\top x_j &\simeq \begin{cases}
        a,&\mbox{ if }y_i = y_j,\\
    b,&\mbox{ if }y_i \ne y_j
    \end{cases}
\end{align*}
where $b < a < 1$ are constants. For simplicity of presentation of our results, We will further assume that $b=-a$ or $b=0$.
\end{condition}
The above structural condition is inspired by an assumption in \cite{das2023understanding}.

For simplicity of exposition, we will only consider balanced distributions, meaning that
\begin{eqnarray*}
\mathbb P(y=1) = \mathbb P(y=0) = 1/2,\text{ for }(x,y) \sim P.
\end{eqnarray*}
    

\paragraph{Gaussian Mixture Example.}
As a first example, in the case of Gaussian mixtures where the features have conditional distribution given by
\begin{eqnarray}
x \mid y \sim N(\mu_y,\Sigma),\\
\end{eqnarray}
where $\mu_y = (2y-1)\mu$, for some $\mu \in \mathbb R^d$ and positive-definite matrix $\Sigma$ with $\mathbb E\,\|x\|^2 = \|\mu\|^2+\operatorname{tr}\Sigma = 1$, we may take
\begin{eqnarray}
a=\|\mu\|^2,\quad b= -a. 
\end{eqnarray}
Condition \ref{cond:main} then holds thanks to concentration, with $N(d) = e^{\Theta(d)}$.

\subsection{Training Data, Data Pruning, and Downstream Model}
Let $D_N=\{(x_1,y_1),\ldots,(x_N,y_N)\}$ be a dataset of $N$ i.i.d. pairs from the true distribution $P$ and let $D_N'=\{(x_1,y_1'),\ldots,(x_N,y_N')\}$ a version of the dataset (also i.i.d.) with labels $y'_i$ instead of $y_i$. For example, this could be labels generated by an AI trying to reproduce real-world data. $D'_N$ is the data on which the downstream model is trained.  

We will consider a family of models given by
\begin{eqnarray*}
\mathbb P (y =1\mid x,w) = \hat y := \sigma(x^\top w) \in (0,1),
\end{eqnarray*}
parametrized by a vector of weights $w \in \mathbb R^d$. Here, $\sigma$ is the sigmoid function defined by
\begin{eqnarray} 
\sigma(z):=\frac{1}{1+e^{-z}}.
\end{eqnarray}
For the loss function, we use binary cross-entropy (BCE), defined by
\begin{eqnarray}
    \ell(\hat y,y) = -y\log\hat y - (1-y)\log (1-\hat y).
\end{eqnarray}
Let $\widehat w_N$ be obtained via logistic regression fitted on $D'_N$ with ridge regularization parameter $\lambda>0$. Thus, $\hat w$ is the unique\footnote{Unicity is due to strong convexity of objective function.} minimizer of the following objective function:
\begin{eqnarray}
L(w) := \frac{1}{N}\sum_{i=1}^N q_i\ell(\sigma(x_i^\top w), y_i') + \frac{\lambda}{2}\|w\|^2.\nonumber
\end{eqnarray}
Here $q_i$ is a bit which indicates whether the $i$th training example has survived pruning. The numbers $q=(q_1,\ldots,q_N)$ is called a \emph{pruning strategy}. The corresponding downstream classifier is $\widehat f_N = f_{\widehat w_N}$, where the notation $f_w$ refers to the linear classifier induced by a weights vector $w \in \mathbb R^d$, i.e
\begin{eqnarray}
 f_w(x) := \begin{cases}
1,&\mbox{ if }x^\top w > 0,\\
0,&\mbox{ otherwise.}
\end{cases}
\end{eqnarray}
The test accuracy of the downstream model $\widehat f_N$ is defined by
$$
acc(\widehat f_N) := \mathbb P(\widehat f_N(x) = f_{Bayes}(x)),\text{ for a random test point }(x,y) \sim P,
$$
where $f_{Bayes}(z) := \mathbb E[y |x=z]$ is the Bayes-optimal classifier. In particular, note that $acc(f_{Bayes})=100\%$ by construction.

This quantity will be the main object of our analysis, and we will be interested in how it depends on the corruption level $p$ and the choice of pruning strategy $q$, in the infinite-sample limit $N \to \infty$.

For later reference, we also define an empirical version, namely the accuracy of $\widehat f_N$ evaluated on the clean dataset $D_N$, namely
\begin{eqnarray}
        \widehat{acc}(\widehat f_N) := \frac{1}{|M|}|\{i \in M \mid \widehat f_N(x_i) = y_i\}|,
\end{eqnarray}
where $M := \{i \in [N] \mid q_i = 1\}$ collects the indices of training samples which survive pruning by $q$.

\subsection{A Class of Parametrized Pruning Strategies}
Given hyper-parameters $\phi_0,\phi_1,\psi_{01},\psi_{10} \in [0,1]$, we consider a broad class of parametrized pruning strategies with the following property. For any class labels $k,\ell \in \{0,1\}$, the random variables $(z_{ik\ell})_{i \in [N]}$ defined by $z_{ik\ell}=1[y_i=k,y_i'=\ell,q_i=1]$ are i.i.d. with Bernoulli distribution $Bern(p_{k\ell})$, where
\begin{eqnarray}
\label{eq:pkl}
    \begin{split}
p_{k\ell} &= \mathbb P(q_i = 1, y'_i=\ell,y_i=k)\\
&= \mathbb P(q_i=1 \mid y'_i=\ell,y_i=k)\mathbb P(y_i'=\ell \mid y_i=k)\mathbb P(y_i=k)\\
&=
\begin{cases}
    \phi_k(1-p)/2,&\mbox{ if }k=\ell,\\
    \psi_{k\ell}p/2,&\mbox{ else.}
\end{cases}    
    \end{split}
\end{eqnarray}
and the numbers $p$, $\phi_k$ and $\psi_{k\ell}$ are defined by
\begin{eqnarray}
p := \mathbb P(y'_i\ne y_i),\quad \phi_k = \mathbb P(q_i = 1 \mid y'_i=k,y_i=k),\quad \psi_{k\ell}= \mathbb P(q_i = 1 \mid y'_i=\ell,y_i=k).
\end{eqnarray}
Consequently, if $N_{k\ell}$ is the number of training examples that have true label $k$, fake label $\ell$, and survive pruning, then
\begin{eqnarray}
\label{eq:Nkl}
N_{k\ell} := \sum_{i=1}^N z_{ik\ell}
\end{eqnarray}
which is has binomial distribution $Bin(N,p_{k\ell})$. As mentioned in the main text, for simplicity of exposition we considered the following simplifying assumption:
\begin{eqnarray}
    \phi_1 = \phi_0 = \phi,\quad \psi_{01}=\psi_{10} = \psi.
\end{eqnarray}
Such a pruning strategy will be referred to as a verification-pruning strategy with parameter $(\phi,\psi)$. 
 \begin{remark}
    Note that the parametrization $(\phi,\psi)$ and $(p_{00},p_{11})$ describe the same verification-pruning strategy via the following bijective transformation.
    \begin{align}
        p_{00} &= p_{11} = (1-p)\phi/2,\quad p_{01}=p_{10} = 
 p \psi /2.
    \end{align}
\end{remark}

\subsection{Examples}
Let us present some notable examples of verification-pruning strategies. 

\paragraph{No Selection.} The case $(\phi,\psi) = (1,1)$ corresponds to no selection, i.e the entire training dataset is used. 
     \paragraph{Oracle Selection.} The case $(\phi,\psi) = (1,0)$. The selection strategy only keeps indices corresponding to examples in the dataset which have correct label (all corrupted labels discarded). 

\paragraph{Supervised (Margin-Based) Selection.} Let $w_{prune} \in \mathbb R^d$, and consider the pruning strategy defined by
    \begin{eqnarray}
    q_i = 1[y'_i(x_i^\top w_{prune})>0].\nonumber
    \end{eqnarray}
    This pruning strategy simplify filters out all examples on which it disagrees on the assigned label.

\subsection{Performance Bounds for Models Trained with Verification Selection}
    \begin{figure}[htb]
    \centering
        \includegraphics[width=.45\textwidth]{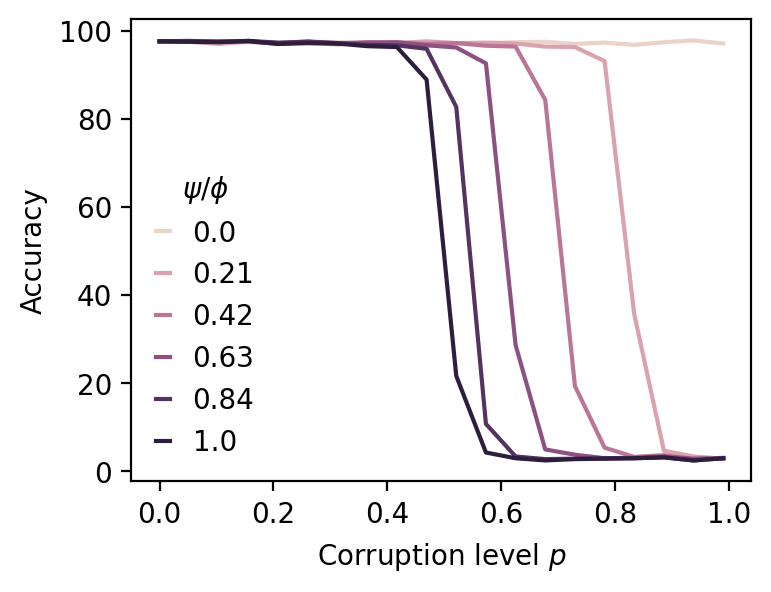}
        \includegraphics[width=.43\textwidth]{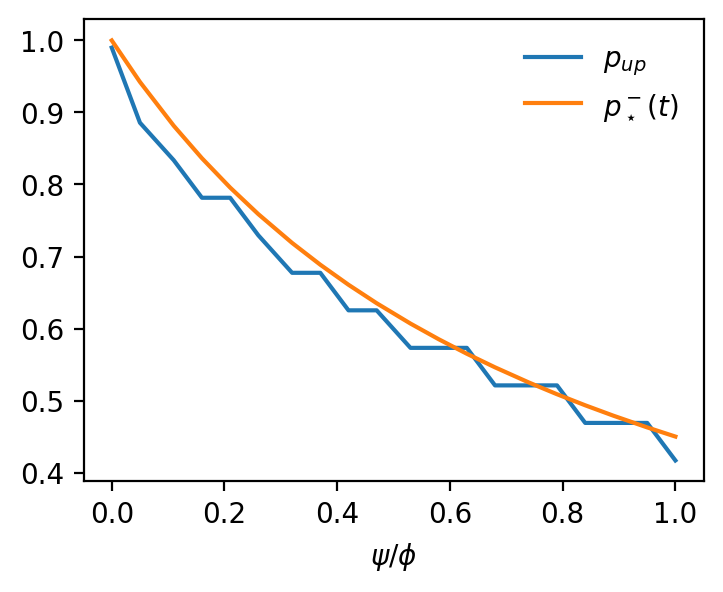}
   \caption{\textbf{Empirical Confirmation of Theorem \ref{thm:main}.} Comparing the breakdown points of different models. Here, the task is classifying a Gaussian mixture, with infinite training samples from datasets generated from a model with classification error rate $p$ (x-axis). Notice the sharp phrase-transitions where the model suddenly switches from perfect accuracy to worse-than-chance, a phenomenon predicted by Theorem \ref{thm:main}. 
    \textbf{Left.} Performance of verification-pruning strategies with different values of the  hyper-parameters $(\phi,\psi)$. Recall that the case $\psi/\phi=1$ corresponds to no pruning, while $\psi/\phi=0$ corresponds to oracle selection. \textbf{Right.} Comparing $p_{up}$, approximated with $\sup\{p \mid acc(\widehat f_N) \ge 90\%\}$ (computed empirically), against the analytic estimate $p^-_\star(t)$ given in Theorem \ref{thm:main} (for $t=0.1$). Again, the results are in excellent agreement with the predictions of the theorem. 
    }
    \label{fig:curves}
\end{figure}
The following is one of our main results (proved in Appendix \ref{sec:proof}).
\begin{theorem}
\label{thm:main}
Suppose Condition \ref{cond:main} is in order. Fix $\phi,\psi,t \in (0,1)$ and define $p^\pm_\star(t) \in (0,1)$ by
\begin{eqnarray}
    p^-_\star(t) := \frac{1 - t}{1 - t + (1 + t)\psi/\phi},\quad p^+_\star:=\frac{1 + t}{1 + t + (1 - t)\psi/\phi}
    \label{eq:threshold}
\end{eqnarray}
If $p < p^-_\star(t)$, then the limit $N \to \infty$ it holds w.p $1-o(1)$  that the $acc(\widehat f_N) = 100\%$ for a downstream model $\widehat f_N$ trained on data from  a generator with error rate $p$ pruned with a verification-pruning strategy  with parameters $(\phi,\psi)$.

On the other hand, if $p > p^+_\star$, then in the limit $N \to \infty$ it holds w.p $1-o(1)$  that the $acc(\widehat f_N) = 0\%$ for a downstream model $\widehat f_N$.

Thus, there is a sharp phase-transition around the corruption level $p_\star := 1/(1+\psi/\phi)$: as $p$ is increased past level $p_\star$, the downstream model $\widehat f_N$ abruptly switches from being perfectly accurate, to perfectly inaccurate!
\end{theorem}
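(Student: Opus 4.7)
My plan is to reduce the high-dimensional, infinite-sample analysis to controlling the sign of a single scalar statistic built from the count vector $(N_{00},N_{01},N_{10},N_{11})$. The starting point is the KKT stationarity condition for the ridge-regularized logistic regression in Eq.~(1), which reads
\begin{equation*}
    \widehat w_N \;=\; \frac{1}{\lambda N}\sum_{j=1}^N T_j\, x_j,\qquad T_j := q_j\bigl(y_j' - \sigma(x_j^\top \widehat w_N)\bigr).
\end{equation*}
Plugging this into $x^\top \widehat w_N$ for a (test or training) point $x$ with label $y$, writing $e:=2y-1$ and $e_j:=2y_j-1$, and invoking Condition~\ref{cond:main} to replace $x^\top x_j$ by its clean value $a\,1[y=y_j]+b\,1[y\ne y_j]$, I will split the result into a label-independent piece with coefficient proportional to $(a+b)\cdot N^{-1}\sum_j T_j$ and a signal piece $c\cdot e\cdot S$, where $c>0$ depends only on $a,b,\lambda$ and
\begin{equation*}
    S \;:=\; \frac{1}{N}\sum_{j=1}^N T_j\, e_j.
\end{equation*}
The label-independent piece vanishes identically when $b=-a$, and when $b=0$ a short calculation using the balance $\mathbb{P}(y=1)=1/2$ and the symmetry of the pruner yields $\mathbb{E}\sum_j T_j=0$, hence $N^{-1}\sum_j T_j=o(1)$ by a CLT-type bound. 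Therefore $\mathrm{sign}(x^\top\widehat w_N)=\mathrm{sign}(eS)$ for every $(x,y)$, and the theorem collapses to determining the sign of $S$: test accuracy is $100\%$ on $\{S>0\}$ and $0\%$ on $\{S<0\}$.

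The second step is to close the fixed point for $S$. Substituting the same approximation $x_j^\top\widehat w_N\approx c\,e_j S$ back inside $T_j$ and grouping the sum by the stratum $\{y_j=k,\,y_j'=\ell,\,q_j=1\}$ collapses the expression, using $1-\sigma(z)=\sigma(-z)$, to the scalar implicit equation
\begin{equation*}
    S \;\approx\; \frac{1}{N}\Bigl[(N_{00}+N_{11})\sigma(-cS) \;-\; (N_{01}+N_{10})\sigma(cS)\Bigr].
\end{equation*}
The right-hand side is strictly decreasing in $S$ while the left-hand side is the identity, so this equation admits a unique solution $S_\star$ whose sign agrees with the sign of the right-hand side evaluated at $S=0$, namely $(N_{00}+N_{11})-(N_{01}+N_{10})$. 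Since $N_{k\ell}\sim\mathrm{Bin}(N,p_{k\ell})$ are independent with $p_{kk}=(1-p)\phi/2$ and $p_{k\ne\ell}=p\psi/2$, a Hoeffding bound gives $N_{k\ell}/N\in p_{k\ell}(1\pm t)$ with probability $1-o(1)$. Plugging these two-sided bounds into the sign condition at $S=0$ reduces it to comparing $(1-p)\phi(1\pm t)$ against $p\psi(1\mp t)$, whose crossover is exactly $p_\star^{\pm}(t)$ from Eq.~(\ref{eq:threshold}); this delivers the two assertions of the theorem.

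The hard part, I expect, will be making the two ``$\approx$'' signs quantitative and uniform. The first error comes from Condition~\ref{cond:main} itself: replacing $x^\top x_j$ by its clean value introduces a small discrepancy per summand that is accumulated $N$ times, and it must be shown to concentrate uniformly in $j$ (and in $x$ for fresh test points). For Gaussian mixtures this is a routine Hanson--Wright/quadratic-form concentration, but a general $P$ satisfying Condition~\ref{cond:main} requires a uniform high-probability strengthening of that condition. The second error is the self-consistent loop: replacing $\sigma(x_j^\top\widehat w_N)$ by $\sigma(ce_j S)$ inside $T_j$ creates feedback that must be absorbed via a contraction/monotonicity argument on the $S$-equation, helped by the fact that $\sigma$ is $\tfrac14$-Lipschitz and that both sides of the fixed point are monotone in $S$, so $S_\star$ is stable under bounded perturbations. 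Once these two estimates are in place, the rest of the proof is a standard combination of binomial concentration for the $N_{k\ell}$ and the sign analysis sketched above, yielding both $p_\star^-(t)$ and $p_\star^+(t)$ simultaneously.
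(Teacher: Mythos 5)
Your proposal is essentially the paper's argument re-parametrized, and it is correct in spirit, but the re-parametrization hides one step that the paper handles more carefully. Both proofs start from the KKT representation $\widehat w_N=\frac{1}{\lambda N}\sum_j T_j x_j$, group the indices into the four strata $(y_j,y_j')\in\{0,1\}^2$ with counts $N_{k\ell}\sim\mathrm{Bin}(N,p_{k\ell})$, use $1-\sigma(z)=\sigma(-z)$ and Condition~\ref{cond:main} to close a fixed point, and finish with binomial concentration that produces exactly the $(1\pm t)$ slack in $p^\pm_\star(t)$. Where the paper tracks four scalars $(\overline A,\overline B,\overline C,\overline D)$ (Eq.~\ref{eq:master}) and shows separately that the required sign conditions reduce to $N_{10}+N_{01}<N_{11}+N_{00}$ when $b=-a$ (Prop.~\ref{prop:symmetric-case}) or to $N_{10}<N_{11}$ and $N_{01}<N_{00}$ when $b=0$ (Prop.~\ref{prop:skewed-case}), you instead work with the two linear combinations $\bar T=\lambda(s+t)$ and $S=\lambda(s-t)$; this is a clean change of variables that collapses the case analysis to the single sign of $S$, and your scalar implicit equation for $S$ is exactly what the paper's Eq.~\ref{eq:master-simplified} becomes after summing the two pairs. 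The one place to be careful is your claim that for $b=0$ the offset $\bar T=N^{-1}\sum_j T_j$ is $o(1)$ ``by a CLT-type bound'': the $T_j$ all depend on $\widehat w_N$ and are not independent, so a direct CLT does not apply. The correct justification is the one the paper encodes implicitly, namely that the fixed point of Eq.~\ref{eq:master-simplified} satisfies $\overline B+\overline D=1$ (swap-symmetry $y\mapsto 1-y$, $y'\mapsto 1-y'$, which holds because the pruner is symmetric and the classes balanced), whence $\bar T\to 0$; without the symmetric-pruner assumption ($\phi_0=\phi_1$, $\psi_{01}=\psi_{10}$) this step genuinely fails and the single-scalar reduction breaks, which is a reason to prefer the paper's four-scalar bookkeeping if one ever drops that assumption. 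One smaller difference: you argue directly about the sign of $x^\top\widehat w_N$ for a fresh test point, whereas the paper computes the empirical accuracy $\widehat{acc}(\widehat f_N)$ on the (clean) training set via Prop.~\ref{prop:acc} and then invokes a uniform generalization bound to transfer to $acc(\widehat f_N)$; both routes are fine but yours requires Condition~\ref{cond:main} to hold jointly for the training set together with the test point, a point worth stating explicitly.
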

See Figure \ref{fig:curves} for an empirical illustration of the theorem.

The thresholds $p^\pm_\star(t)$ appearing in the above theorem are proxies for the so-called \emph{breakdown points} $p_{up} \ge p_{down}$ defined by
\begin{align}
    p_{up} &= \inf\left\{p \in [0,1] \,\big|\, acc(\widehat f_N) \overset{a.s}\to 0\%\text{ in the limit }N \to \infty\right\},\\
        p_{down} &= \sup\left\{p \in [0,1] \,\big|\, acc(\widehat f_N) \overset{a.s}\to 100\%\text{ in the limit }N \to \infty\right\}.
\end{align}
Theorem \ref{thm:main} implies $p_{down} \ge p^-_\star(t)$ and $p_{up} \le p^+_\star(t)$ for all $t \in (0,1)$. Consequently,
\begin{corollary}
    Under the hypotheses of Theorem \ref{thm:main}, it holds that $p_{up} = p_{down}$.
\end{corollary}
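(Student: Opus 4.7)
The plan is to deduce the equality $p_{up} = p_{down}$ as a direct consequence of Theorem~\ref{thm:main} by sending the auxiliary parameter $t$ to $0^+$. The key observation is that the lower threshold $p^-_\star(t)$ and the upper threshold $p^+_\star(t)$ both converge to the common value $p_\star := 1/(1+\psi/\phi)$ as $t \to 0^+$, so the two one-sided bounds furnished by the theorem must pinch to a single point.

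First I would record the trivial inequality $p_{down} \leq p_{up}$ from the definitions alone: if $p < p_{down}$ then $acc(\widehat f_N) \to 100\%$ almost surely, which rules out $acc(\widehat f_N) \to 0\%$, so $p \leq p_{up}$; taking the supremum over such $p$ gives $p_{down} \leq p_{up}$. Next, Theorem~\ref{thm:main} provides, for every $t \in (0,1)$, the two inequalities $p_{down} \geq p^-_\star(t)$ and $p_{up} \leq p^+_\star(t)$, whence
\[
p_{down} \;\geq\; \sup_{t \in (0,1)} p^-_\star(t), \qquad p_{up} \;\leq\; \inf_{t \in (0,1)} p^+_\star(t).
\]

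The remaining step is a short monotonicity/limit check. Writing $r := \psi/\phi \geq 0$ and differentiating the closed-form expressions in \eqref{eq:threshold} yields
\[
\frac{d}{dt} p^-_\star(t) \;=\; \frac{-2r}{\bigl((1-t)+(1+t)r\bigr)^2} \;\leq\; 0, \qquad
\frac{d}{dt} p^+_\star(t) \;=\; \frac{2r}{\bigl((1+t)+(1-t)r\bigr)^2} \;\geq\; 0,
\]
so $p^-_\star$ is non-increasing and $p^+_\star$ is non-decreasing on $(0,1)$. Both expressions extend continuously to $t=0$ with $p^-_\star(0^+) = p^+_\star(0^+) = 1/(1+r) = p_\star$, so the supremum and infimum above both equal $p_\star$. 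Chaining gives $p_\star \leq p_{down} \leq p_{up} \leq p_\star$, which forces $p_{up} = p_{down} = p_\star$.

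The argument is essentially a pinching argument, so there is no deep obstacle; the one point that requires a small amount of care is that Theorem~\ref{thm:main} is stated only for $t \in (0,1)$ and not at $t = 0$, so the conclusion at the endpoint must be obtained via a limiting argument rather than by substitution. The monotonicity of $p^\pm_\star$ in $t$ handles this cleanly and is the only computation needed beyond what the theorem already delivers.
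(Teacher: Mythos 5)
Your pinching argument is the same route the paper takes: Theorem~\ref{thm:main} yields $p_{down} \ge p^-_\star(t)$ and $p_{up} \le p^+_\star(t)$ for every $t\in(0,1)$, and letting $t\to 0^+$ both thresholds collapse to $p_\star = 1/(1+\psi/\phi)$; your monotonicity computation for $p^\pm_\star$ is correct (continuity at $t=0$ would already suffice).

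The one step that does not hold up as written is the claim that $p_{down}\le p_{up}$ follows ``from the definitions alone.'' Write $S_{100}=\{p:\ acc(\widehat f_N)\to 100\%\}$ and $S_{0}=\{p:\ acc(\widehat f_N)\to 0\%\}$, so that $p_{down}=\sup S_{100}$ and $p_{up}=\inf S_{0}$. From the definitions alone, $p<p_{down}$ does \emph{not} imply $p\in S_{100}$ (a supremum says nothing about points below it unless the set is known to be an interval), and likewise ``$p\notin S_{0}$'' does not imply $p\le \inf S_{0}$; for instance $S_{100}=[0,0.1]\cup\{0.6\}$ and $S_{0}=[0.3,0.4]$ are disjoint yet give $p_{down}=0.6>0.3=p_{up}$, and nothing in the bare definitions forbids such sets. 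What rules this out here is Theorem~\ref{thm:main} itself together with the fact that the two limits are mutually exclusive: the theorem gives $S_{100}\supseteq[0,p_\star)$ and $S_{0}\supseteq(p_\star,1]$, and disjointness then forces $S_{100}\subseteq[0,p_\star]$ and $S_{0}\subseteq[p_\star,1]$, hence $p_{down}\le p_\star\le p_{up}$ --- indeed this gives $p_{down}=p_{up}=p_\star$ outright, so the a priori ordering is not even needed as a separate ingredient. To be fair, the paper also asserts $p_{up}\ge p_{down}$ without proof, so your write-up matches its level of rigor; but the justification you give for that inequality should be replaced by the disjointness argument above (or by an explicit monotonicity assumption on $p\mapsto acc(\widehat f_N)$), since as stated it is a deduction that would fail for general sets.
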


\subsection{Some Consequences of Theorem \ref{thm:main}}\label{app:consequences}
We now present some illustrious applications of Theorem \ref{thm:main}. These examples are empirically confirmed in Figure \ref{fig:curves}.

\paragraph{No Selection.}
Here, we have $\psi/\phi=1$ and so the downstream model achieves $100\%$ accuracy for all values of corruption parameter $p$ up to the proxy  breakdown point predicted by Theorem \ref{thm:main} is then $p^-_\star = 1/2-t/2$.

\paragraph{Oracle Selection.} For this scenario, $\psi/\phi=0$ and  so Theorem \ref{thm:main} predicts that the downstream model $\widehat f_N$ achieves $100\%$ accuracy for all values of corruption parameter $p$ up to the breakdown point $p^-_\star = 1$. This is perhaps not so surprising in hindsight. The point is that even for moderately large values of $\psi/\phi$, the proxy breakdown point $p^-_\star$ given in \eqref{eq:threshold} can still be quite close to $1$.

\paragraph{Self-supervised (Margin-Based) Selection.}
Consider Gaussian mixture data with means $\pm \mu$, and consider a margin-based pruning strategy in Equation (\ref{eq:pruner}). It is clear that $\phi$ and $\psi$ only depend on all the 3 angles between the set of vectors $\{w_*,w_{gen},w_{prune}\}$, with $w_*=\mu$. 

\paragraph{Supervised Selection.}
Consider isotropic Gaussian mixture data with means $\pm \mu$, and a pruning strategy as in Eq.~(\ref{eq:pruner}). The parameters $(\phi,\psi)$ only depend on the angles $\theta_{gen},\theta_{prune},\theta \in [0,\pi]$ given by 
\begin{eqnarray}
\begin{split}
\theta_{gen}&:=\angle (w_{gen},\mu),\, \theta_{prune} := \angle (w_{prune},\mu),\\
\theta &:= \angle (w_{prune},w_{gen}).
\end{split}
\end{eqnarray}

Conditioned on \(y = 1\) and using \(\mathbb{E}\|x\|^2 = \|\mu\|_2^2 + \operatorname{tr}\Sigma = 1\) (and \(\|\mu\|_2 <1\)), we get \(x \sim \mathcal{N}(\mu, \frac{1}{(1 - \|\mu\|_2^2) d} \mathbf{I}_d)\). We can rewrite \(x\) as:
\begin{equation}\label{eq:eta}
x = \mu + \frac{1}{(1 - \|\mu\|_2^2)d} \eta,
\end{equation}
where \(\eta \sim \mathcal{N}(0, \mathbf{I}_d)\). For simplicity of calculation, let us further assume here that \(\|w_{\text{gen}}\|_2^2 = \|w_{\text{prune}}\|_2^2 = 1\).

\[
\begin{aligned}
    \phi_1 &= \mathbb{P}(x^\top w_{\text{prune}} > 0 \mid y = 1, y' = 1) \\
    &= \frac{\mathbb{P}(x^\top w_{\text{prune}} > 0, y = 1, x^\top w_{\text{gen}} > 0)}{\mathbb{P}(y = 1, x^\top w_{\text{gen}} > 0)} \\
    &= \frac{\mathbb{P}(\eta^\top w_{\text{prune}} > -\|\mu\|_2 d (1 - \|\mu\|_2^2) \cos \theta_{\text{prune}}, \eta^\top w_{\text{gen}} > -\|\mu\|_2 d (1 - \|\mu\|_2^2) \cos \theta_{\text{gen}})}{\mathbb{P}(\eta^\top w_{\text{gen}} > -\|\mu\|_2 d (1 - \|\mu\|_2^2) \cos \theta_{\text{gen}})}.
\end{aligned}
\]
In the second step, we use the definition of conditional probability and use \eqref{eq:eta} from line 2 to line 3.
The random variables \(\eta^\top w_{\text{gen}}\) and \(\eta^\top w_{\text{prune}}\) are jointly Gaussian with:
\[
\begin{pmatrix}
\eta^\top w_{\text{gen}} \\
\eta^\top w_{\text{prune}}
\end{pmatrix}
\sim \mathcal{N} \left( \mathbf{0}, \begin{pmatrix}
1 & \cos \theta \\
\cos \theta & 1
\end{pmatrix} \right).
\]

Let \(\Phi\) be the CDF of the standard normal distribution and \(\Phi_2\)  the CDF of the bivariate normal distribution, where \(\Phi_2(x, y; \rho)\) is defined as:
\[
\Phi_2(x, y; \rho) := \mathbb{P}(X \le x, Y \le y)
\]
for \((X, Y) \sim \mathcal{N} \left( \mathbf{0}, \begin{pmatrix}
1 & \rho \\
\rho & 1
\end{pmatrix} \right)\). Denote \(c_1 = -\|\mu\|_2 d (1 - \|\mu\|_2^2) \cos \theta_{\text{prune}}\) and \(c_2 = -\|\mu\|_2 d (1 - \|\mu\|_2^2) \cos \theta_{\text{gen}}\). We have:
\[
\begin{aligned}
    \phi_1 &= \frac{1 - \Phi(c_1) - \Phi(c_2) + \Phi_2(c_1, c_2; \cos \theta)}{1 - \Phi(c_2)}.
\end{aligned}
\]

All the distributions are symmetric, and we have $\phi_0=\phi_1=\phi$. In the same spirit, $\psi = \psi_{10} = \psi_{01}$, with 
\[
\begin{aligned}
    \psi= \frac{\Phi(c_1) + \Phi(c_2) - \Phi_2(c_1, c_2; \cos \theta)}{ \Phi(c_2)}.
\end{aligned}
\]

\subsection{Sketch of Proof of Theorem \ref{thm:main}}\label{app:sketch}
The proof is based on the following representation (refer to Proposition \ref{prop:acc}) of the accuracy of the downstream classifier $\widehat f_N$ evaluated on the the clean training dataset $D_N$, namely
\begin{eqnarray}
\widehat{acc}(\widehat f_N) = \frac{N_{11}1_{\overline A < 1/2} + N_{00} 1_{\overline D < 1/2} + N_{10}1_{\overline B > 1/2} + N_{01} 1_{\overline C > 1/2}}{N_{11} + N_{00} + N_{10} + N_{01}},
\end{eqnarray}
for some random some random variables $\overline A,\overline B,\overline C,\overline D \in (0,1)$ which depend on the $N_{k\ell}$'s defined in \eqref{eq:Nkl}.

\begin{remark}
We only compute the accuracy $\widehat{acc}(\widehat f_N)$ of the downstream model $\widehat f_N$ evaluated on the clean training dataset $D_N$. By classical results in learning theory \citep{mcAllester2003,BenDavidUnderstanding,kakade2008complexity}, we know that the gap to the population version (test accuracy) $acc(\widehat f_N)$ shrinks to zero at rate $O(1/\sqrt N)$, and so since the claim in Theorem \ref{thm:main} is made only in the limit $N \to \infty$, we are good.
\end{remark}

Next, in Proposition \ref{prop:symmetric-case} and Proposition  \ref{prop:skewed-case}, necessary and sufficient conditions are established to ensure $\overline A,\overline D < 1/2$ and $\overline B,\overline C > 1/2$, and therefore $\widehat{acc}(\widehat f_N) = 100\%$. These conditions are given explicitly in terms of the $N_{k\ell}$'s. Finally, in Proposition \ref{prop:concentration}, concentration of measure is used to control the $N_{k\ell}$'s, and present the aforementioned conditions in terms of the $p_{k\ell}$'s defined in \eqref{eq:pkl}, and therefore in terms of $p$, $\phi$, and $\psi$ alone, giving condition \eqref{eq:threshold}.
\label{prop:acc-informal}

\section{Proof of Theorem \ref{thm:main}}\label{sec:proof}
Our analysis is based on non-trivial extensions of arguments by Das and Sanghavi (2023). Viz,  
\begin{itemize}
    \item We allow for a selection mechanism (aforementioned work does study selection, just self-distillation), and
    \item We use a careful asymptotic analysis to avoid solving certain complicated fixed-point equations defining the weights vector $\widehat w_N$ of the downstream model $\widehat f_N$.
\end{itemize}

\subsection{Preliminary Computations}
For later use, given a pruning strategy $q$, define the following objects
\begin{align}
I_k &:= \{j \in [N] \mid y_j = k\},\\
I'_\ell &:= \{j \in [n] \mid y_j' = \ell\},\\
M &:= \{i \in [N] \mid q_i = 1\},\\
N_{k\ell} &:= \sum_{i \in I_k \cap I'_\ell} q_i = |I_k \cap I'_\ell \cap M|,\\
R&:=1-a>0.
\end{align}
Thus, $N_{k\ell}$ is the number of training examples that have true label $k$, fake label $\ell$, and survive pruning. The following result will be crucial in the sequel.
\begin{proposition}
We have the representation $\widehat w = \sum_{i \in M}\alpha_i x_i$, where
\begin{align}
\alpha_i &= \begin{cases}
A,&\mbox{ if }i \in I_1 \cap I_1' \cap M,\\
-B,&\mbox{ if }i \in I_1 \cap I_0' \cap M,\\
C,&\mbox{ if }i \in I_0 \cap I_1' \cap M,\\
-D,&\mbox{ if }i \in I_0 \cap I_0' \cap M,
\end{cases}
\end{align}
and $A,B, C, D \ge 0$ solve the following system of equations
\begin{eqnarray}
\label{eq:master}
\begin{split}
\gamma  A &= \sigma(-(aN_{11}A-aN_{10}B+b N_{01}C-bN_{00}D)-RA),\\
\gamma  B &= \sigma(aN_{11}A-aN_{10}B+b N_{01}C-bN_{00}D  - RB),\\
\gamma  C &= \sigma(-(bN_{11}A-bN_{10}B+a N_{01}C-aN_{00}D)-RC),\\
\gamma  D &= \sigma(bN_{11}A-bN_{10}B+a N_{01}C-aN_{00}D - RD).
\end{split}
\end{eqnarray}
\end{proposition}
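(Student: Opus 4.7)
The plan is to derive the system~(\ref{eq:master}) from the first-order optimality condition of the strongly convex ridge-regularized logistic objective, and then exploit the concentration in Condition~\ref{cond:main} to reduce the dual variables $\alpha_i$ to an ansatz that depends on $i \in M$ only through the pair $(y_i, y_i')$.

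First I would write $\nabla L(\widehat w_N) = 0$, which reads
$$\lambda \widehat w_N + \frac{1}{N}\sum_{i \in M}\bigl(\sigma(x_i^\top \widehat w_N) - y_i'\bigr)\, x_i = 0,$$
and immediately gives the representation $\widehat w_N = \sum_{i \in M}\alpha_i x_i$ with $\alpha_i = (y_i' - \sigma(x_i^\top \widehat w_N))/\gamma$, where $\gamma := N\lambda$. Using the identity $1-\sigma(u)=\sigma(-u)$, this rewrites as $\alpha_i = \sigma(-x_i^\top \widehat w_N)/\gamma \ge 0$ when $y_i'=1$ and $\alpha_i = -\sigma(x_i^\top \widehat w_N)/\gamma \le 0$ when $y_i'=0$, which already matches the sign pattern $+A,-B,+C,-D$ of the statement.

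Next I would make the ansatz that $\alpha_i$ depends on $i$ only through $(y_i, y_i')$, i.e.\ $\alpha_i \in \{A,-B,C,-D\}$ across the four strata $I_k \cap I'_\ell \cap M$. Plugging this into $x_i^\top \widehat w_N = \sum_{j \in M}\alpha_j x_i^\top x_j$ and applying Condition~\ref{cond:main} (so that $\|x_i\|^2 \simeq 1$, $x_i^\top x_j \simeq a$ when $y_i=y_j$, and $\simeq b$ otherwise), while isolating the diagonal contribution $\alpha_i(1-a) = R\alpha_i$ from the $j=i$ term, yields e.g.\ on $I_1\cap I'_1\cap M$
$$x_i^\top \widehat w_N \simeq RA + aN_{11}A - aN_{10}B + bN_{01}C - bN_{00}D,$$
and the analogous expressions on the three other strata (with $R$ replaced by $-R$ whenever $y_i'=0$, and the roles of $a$ and $b$ swapped whenever $y_i=0$). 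Substituting each of these four expressions back into $\alpha_i = (y_i' - \sigma(x_i^\top \widehat w_N))/\gamma$ reproduces exactly the four equations in~(\ref{eq:master}). Existence of a solution $(A,B,C,D) \in [0,1/\gamma]^4$ follows from Brouwer applied to the sigmoid map, and uniqueness of $\widehat w_N$ comes from $\lambda$-strong convexity of $L$.

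The main obstacle is that Condition~\ref{cond:main} only guarantees $x_i^\top x_j \simeq a$ or $b$ up to $o(1)$ errors, so the ansatz does not satisfy the KKT system exactly. To close this, I would write $\widehat w_N = w_0 + \varepsilon$, with $w_0$ realizing the ansatz, and bound $\|\varepsilon\|$ by $\|\nabla L(w_0)\|/\lambda$ via strong convexity; since Condition~\ref{cond:main} gives uniform $o(1)$ control of the Gram-matrix entries with probability $1-o(1)$, the gradient perturbation is $o(1)$ and the ansatz is asymptotically exact. As the proposition is used downstream only in the limit $N \to \infty$ to pass to Propositions that control $\widehat{acc}(\widehat f_N)$ via the $N_{k\ell}$'s, this asymptotic identification is precisely what is needed.
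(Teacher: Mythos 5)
Your proposal follows the same skeleton as the paper's proof (KKT stationarity giving $\widehat w_N = \sum_{i\in M}\alpha_i x_i$ with $\alpha_i=(y_i'-\sigma(x_i^\top\widehat w_N))/\gamma$, substitution of the Gram structure from Condition~\ref{cond:main}, and collection into the four equations), but it differs in one step: how the stratum-constancy of the $\alpha_i$ is justified. You \emph{posit} the ansatz that $\alpha_i$ depends on $i$ only through $(y_i,y_i')$, then invoke Brouwer to produce a solution of \eqref{eq:master} and KKT-sufficiency plus strong convexity to identify the resulting vector with $\widehat w_N$. The paper instead \emph{derives} the stratum-constancy without any ansatz: writing $v_i = as+bt+R\alpha_i$ for $i\in I_1\cap M$ (and $bs+at+R\alpha_i$ for $i\in I_0\cap M$), where $s=\sum_{j\in I_1}q_j\alpha_j$ and $t=\sum_{j\in I_0}q_j\alpha_j$ are index-independent aggregates, the stationarity relation $\gamma\alpha_i = y_i'-\sigma(v_i)$ becomes, for each $i$ in a given stratum, the \emph{same} scalar equation $\gamma\alpha+\sigma(\mathrm{const}+R\alpha)=y_i'$ whose left-hand side is strictly increasing in $\alpha$; this has a unique root, so all $\alpha_i$ in a stratum coincide and the values $A,B,C,D\ge 0$ (and their signs) come out automatically. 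The monotonicity route is more direct, since it removes the need for a separate existence argument and yields non-negativity for free. On the other hand, your final paragraph on the $o(1)$ slack in Condition~\ref{cond:main} is a genuine addition: the paper's proof silently replaces $\simeq$ by $=$ in the inner-product estimates, so the representation in the proposition is really only asymptotic, and your perturbation bound $\|\varepsilon\|\le\|\nabla L(w_0)\|/\lambda$ via strong convexity is precisely the right tool to make that rigorous in the $N\to\infty$ regime where the proposition is invoked.
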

\begin{proof}
The following result is inspired by \citep{das2023understanding} and the proof is similar. Observe that
KKT conditions $\nabla L(w) = 0$ give $\sum_{i=1}^Nq_i(\hat y_i-y_i')x_i + \gamma w = 0$, i.e
\begin{eqnarray}
w = \sum_{i=1}^N q_i\alpha_ix_i,\text{ with }\alpha_i := \frac{y_i'-\hat y_i}{\gamma},\, \hat y_i := \sigma(v_i),\, v_i = x_i^\top w.
\end{eqnarray}
One then computes 
\begin{eqnarray}
\begin{split}
v_i &= x_i^\top w = \sum_{j=1}^N q_i\alpha_i x_i^\top x_j = q_i\alpha_i + \begin{cases}
a(s-q_i\alpha_i) + bt,&\mbox{ if }i \in I_1,\\
a(t-q_i\alpha_i) + bs,&\mbox{ if }i \in I_0,
\end{cases}\\
&= \begin{cases}
as+bt+Rq_i\alpha_i,&\mbox{ if }i \in I_1,\\
bs+at + Rq_i\alpha_i,&\mbox{ if }i \in I_0,
\end{cases}
\end{split}
\end{eqnarray}
where $s \ge 0$ and $t \ge 0$ are given by
\begin{align}
s &:= \sum_{j \in I_1} q_j\alpha_j,\quad
t :=\sum_{i \in I_0}q_j\alpha_j.
\end{align}
We deduce that for any $i \in M$,
\begin{eqnarray}
\begin{split}
\gamma \alpha_i = y_i'-\sigma(v_i) = \begin{cases}
1-\sigma(as+bt+Rq_i\alpha_i),&\mbox{ if }i \in I_1 \cap I_1',\\
-\sigma(as+bt+Rq_i\alpha_i),&\mbox{ if }i \in I_1 \cap I_0',\\
1-\sigma(bs+at+Rq_i\alpha_i),&\mbox{ if }i \in I_0 \cap I_1',\\
-\sigma(bs+at+Rq_i\alpha_i),&\mbox{ if }i \in I_0 \cap I_0'.
\end{cases}
\end{split}
\end{eqnarray}
Due to monotonicity of $\sigma$, we deduce the existence of $A,B, C, D \ge 0$ such that
\begin{align}
\alpha_i &= \begin{cases}
A,&\mbox{ if }i \in I_1 \cap I_1' \cap M,\\
-B,&\mbox{ if }i \in I_1 \cap I_0' \cap M,\\
C,&\mbox{ if }i \in I_0 \cap I_1' \cap M,\\
-D,&\mbox{ if }i \in I_0 \cap I_0' \cap M.
\end{cases}\\
\hat y_i &= y_i'-\gamma\alpha_i = \begin{cases}
    1-\gamma A,&\mbox{ if }i \in I_1 \cap I_1' \cap M,\\
    \gamma B,&\mbox{ if }i \in I_1 \cap I_0' \cap M,\\
    1-\gamma C,&\mbox{ if }i \in I_0 \cap I_1' \cap M,\\
    \gamma D,&\mbox{ if }i \in I_0 \cap I_0' \cap M.
\end{cases}
\end{align}
Furthermore, these scalars must verify
\begin{eqnarray}
\label{eq:premaster}
\begin{split}
\gamma A &= 1-\sigma(as+bt+RA) = \sigma(-(as+bt)-RA),\\
\gamma B &= \sigma(as + bt  - RB),\\
\gamma C &= 1-\sigma(bs+at+RC) = \sigma(-(bs+at)-RC),\\
\gamma D &= \sigma(bs+at - RD).
\end{split}
\end{eqnarray}
Finally, observe that,
\begin{align}
 s&=N_{11}A-N_{10}B,\quad t = N_{01}C-N_{00}D,
 \end{align}
from which we get
\begin{align*}
as + bt &= a (N_{11} A-N_{10} B) + b(N_{01}C -N_{00} D)\\
&= aN_{11} A - a N_{10} B + bN_{01}  C - b N_{00} D,\\
bs + at &= b (N_{11} A-N_{10} B) + a(N_{01}C -N_{00} D)\\
&= bN_{11} A - b N_{10} B + aN_{01} C - a N_{00} D.
\end{align*}
Plugging this into  \eqref{eq:premaster} gives \eqref{eq:master}.
\end{proof}

\subsection{Analytic Formula for Accuracy Evaluated Clean Training Data}
One computes the accuracy $\widehat{acc}(\widehat f_N)$ of the downstream model evaluated on the clean training dataset $D_N$ as
$$
\widehat{acc}(\widehat f_N) = \frac{1}{|M|}\left(\left|\{i \in M \mid y_i=1 \land \hat y_i > 1/2 \text{ OR }y_i=0\land \hat y_i < 1/2\}\right|\right).
$$
We can rewrite this as follows
\begin{eqnarray}
\begin{split}
|M| \cdot \widehat{acc}(\widehat f_N) &= \left|\{i \in M \mid y_i=1 \land \hat y_i > 1/2 \text{ OR }y_i=0\land \hat y_i < 1/2\}\right|\\
&=\left|\{i \in I_1 \cap M \mid \hat y_i > 1/2\}\right| + \left|\{i \in I_0 \cap M \mid \hat y_i < 1/2\}\right|\\
&=\sum_{i \in I_1 \cap M}1_{\hat y_i > 1/2} + \sum_{i \in I_0 \cap M}1_{\hat y_i < 1/2}\\
&= |I_1 \cap I_1' \cap M|1_{\gamma A < 1/2} + |I_1 \cap I_0' \cap M| 1_{\gamma B > 1/2}\\
&\quad + |I_0 \cap I_1' \cap M|1_{\gamma C > 1/2} + |I_0 \cap I_0' \cap M| 1_{\gamma D < 1/2}\\
&= N_{11}1_{\gamma A < 1/2} + N_{00} 1_{\gamma D < 1/2} + N_{10}1_{\gamma B > 1/2} + N_{01} A_{\gamma C > 1/2}.
\end{split}
\end{eqnarray}
On the other hand, it is clear that the size of the mask is $|M| = \sum_{k,\ell} N_{k\ell}$. Putting things together gives the following result which shall be crucial in the sequel.
\begin{proposition}
For any $\phi,\psi \in [0,1]$, there is a solution $(A,B,C,D)$ of the system of equations \eqref{eq:master} such that 
\begin{eqnarray}
\widehat{acc}(\widehat f_N) = \frac{N_{11}1_{\overline A < 1/2} + N_{00} 1_{\overline D < 1/2} + N_{10}1_{\overline B > 1/2} + N_{01} 1_{\overline C > 1/2}}{N_{11} + N_{00} + N_{10} + N_{01}},
\end{eqnarray}
where $\overline A := \gamma A$, $\overline B = \gamma B$, $\overline C = \gamma C$, and $\overline D = \gamma D$ as usual.
\label{prop:acc}
\end{proposition}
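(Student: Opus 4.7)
The plan is to start directly from the definition of $\widehat{acc}(\widehat f_N)$ as the empirical accuracy of the downstream classifier on the surviving training indices $M$, and then refine the index set by jointly conditioning on the clean label $y_i$ and the synthesized label $y_i'$. Concretely, I would partition $M$ into the four disjoint pieces $I_k \cap I'_\ell \cap M$ for $k,\ell \in \{0,1\}$, whose cardinalities are exactly the counts $N_{k\ell}$. Correctness on index $i$ means $\hat y_i > 1/2$ when $y_i=1$ and $\hat y_i < 1/2$ when $y_i=0$, so the numerator of $\widehat{acc}(\widehat f_N)$ breaks into a sum of four indicator contributions, one for each of the four quadrants.

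The key step is to plug in the explicit values of $\hat y_i$ obtained from the previous proposition. Recall that on $M$ we have $\alpha_i = (y_i' - \hat y_i)/\gamma$, and the KKT analysis gave $\alpha_i \in \{A,-B,C,-D\}$ depending on which quadrant $(y_i,y_i')$ lives in; equivalently $\hat y_i$ takes the four deterministic values $1-\gamma A,\,\gamma B,\,1-\gamma C,\,\gamma D$. Substituting these into each of the four quadrant-wise sums converts the random-looking indicator $1_{\hat y_i > 1/2}$ or $1_{\hat y_i < 1/2}$ into one of $1_{\overline A < 1/2}$, $1_{\overline B > 1/2}$, $1_{\overline C > 1/2}$, $1_{\overline D < 1/2}$, where $\overline A := \gamma A$ and analogously for the others. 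Since each of these four indicators is constant across its quadrant, it factors out of the sum and multiplies the corresponding count $N_{k\ell}$, yielding exactly the numerator in the stated formula.

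The denominator is immediate: $|M| = \sum_{i=1}^N q_i$, and by partitioning $M$ via $(y_i,y_i')$ one has $|M| = N_{11}+N_{00}+N_{10}+N_{01}$. For existence of a solution $(A,B,C,D)$ to the master system \eqref{eq:master}, I would invoke the footnote on strong convexity of $L(w)$ (with $\lambda>0$): strong convexity gives a unique minimizer $\widehat w_N$, and setting $\alpha_i = (y_i'-\hat y_i)/\gamma$ and reading off the common values of $\alpha_i$ on each quadrant supplies a tuple $(A,B,C,D)\ge 0$ that automatically satisfies \eqref{eq:master}. So existence is essentially free from the previous proposition.

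I do not expect any real obstacle here beyond careful bookkeeping: the proposition is really just a rewriting of the definition of accuracy using the four-quadrant structure of $\widehat w_N$ already established. The only point requiring a touch of care is to check that the indicator contributions are assigned to the correct side of $1/2$ (e.g.\ on $I_1 \cap I_1' \cap M$, correctness requires $\hat y_i = 1-\overline A > 1/2$, i.e.\ $\overline A < 1/2$), and that empty quadrants --- i.e.\ $N_{k\ell}=0$ --- contribute nothing and so the corresponding value of $A,B,C,D$ is irrelevant, which is why only ``a solution'' rather than ``the unique solution'' is claimed.
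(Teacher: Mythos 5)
Your proposal matches the paper's proof essentially line for line: you partition $M$ by $(y_i,y_i')$ into the four quadrants of sizes $N_{k\ell}$, substitute the explicit quadrant-wise values $\hat y_i \in \{1-\gamma A,\, \gamma B,\, 1-\gamma C,\, \gamma D\}$ from the preceding proposition, observe that each correctness indicator is constant on its quadrant and factors out, and divide by $|M| = \sum_{k,\ell} N_{k\ell}$. Your careful check of the indicator directions (e.g. $1-\overline A>1/2 \iff \overline A<1/2$) and your remark on existence of $(A,B,C,D)$ via the preceding representation are both exactly what the paper does, so there is nothing to fix.
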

Thus, to attain $100\%$ accuracy, it suffices to have $\overline A,\overline D < 1/2$ and $\overline B,\overline C > 1/2$. The proof of Theorem \ref{thm:main} will be all about establishing sufficient conditions which ensure these inequalities.

\subsection{Sufficient Conditions for Perfect Accuracy}
Note that since $\gamma = N \lambda$ with $\lambda>0$ fixed and $N \to \infty$, we have $\gamma \to \infty$ and system of equations \eqref{eq:master} simplify to\footnote{These simplifications are made possible by the \emph{Mean Value Theorem}.} 
\begin{eqnarray}
    \begin{split}
\overline B &= \sigma((aN_{11}\overline A-aN_{10}\overline B+b N_{01}\overline C-bN_{00}\overline D)/\gamma),\\
\overline A &= \sigma(-(aN_{11}\overline A-aN_{10}\overline B+b N_{01}\overline C-bN_{00}\overline D)/\gamma)=1-\overline B,\\
\overline D &= \sigma((bN_{11}\overline A-bN_{10}\overline B+a N_{01}\overline C-aN_{00}\overline D)/\gamma),\\
\overline C &= \sigma(-(bN_{11}\overline A-bN_{10}\overline B+a N_{01}\overline C-aN_{00}\overline D)/\gamma)=1-\overline D,
\end{split}
\end{eqnarray}
where $\overline A := \gamma A$, $\overline B = \gamma B$, $\overline C = \gamma C$, $\overline D = \gamma D$ as usual, and we have used the elementary property that $\sigma(-z) = 1-\sigma(z)$. Eliminating $\overline A$ and $\overline C$, the above equations further collapse to
\begin{eqnarray}
    \begin{split}\overline B &= \sigma((aN_{11}(1-\overline B)-aN_{10}\overline B+b N_{01}(1-\overline D)-bN_{00}\overline D)/\gamma),\\
&= \sigma((aN_{11}+bN_{01}-a(N_{11}+N_{10})\overline B -b(N_{01} + N_{00})\overline D)/\gamma),\\
\overline D &= \sigma((bN_{11}(1-\overline B)-bN_{10}\overline B+a N_{01}(1-\overline D)-aN_{00}\overline D)/\gamma)\\
&= \sigma((bN_{11}+aN_{01}-b(N_{10}+N_{10})\overline B - a(N_{01}+N_{00})\overline D)/\gamma).
\end{split}
\label{eq:master-simplified}
\end{eqnarray}
Two special cases are tractable.

\paragraph{The Symmetric Case: $b=-a$.}
We have $\overline D = 1-\overline B$, and thus the equations become
\begin{eqnarray}
    \begin{split}
\overline D &= \overline A,\quad \overline C=\overline B,\quad \overline D=1-\overline B,\\
\overline B &=  \sigma((a(N_{11}+N_{00})-a(N_{11} + N_{10} + N_{01} + N_{00})\overline B)/\gamma).
\end{split}
\end{eqnarray}
If $\overline B \le 1/2$, then we must have $\overline B \ge (N_{11}+N_{00})/(N_{11} + N_{10} + N_{01} + N_{00})$, which is impossible if we impose
\begin{eqnarray}
    N_{10} + N_{01} < N_{11} + N_{00},
\end{eqnarray}
i.e the number of bad indices which survive is smaller than the number of good indices which survive pruning. Thus, under the previous condition, we must have $\overline C=\overline B>1/2$ and $\overline A=\overline D = 1-\overline B < 1/2$. By symmetry of the preceeding argument we know that the condition is also necessary.
We deduce the following result.
\begin{proposition} 
Suppose $b=-a$. Then, for any solution $(A,B,C,D)$ of the system of equations \eqref{eq:master}, the inequalities
\begin{eqnarray}
    \overline C=\overline B > 1/2,\quad \overline D=\overline A<1/2,
\end{eqnarray}
hold if and only iff $ N_{10} + N_{01} < N_{11} + N_{00}$.

\label{prop:symmetric-case}
\end{proposition}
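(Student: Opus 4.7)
The plan is to use the symmetry $b=-a$ to collapse the two-variable fixed-point system \eqref{eq:master-simplified} for $(\overline B,\overline D)$ into a single scalar equation in $\overline B$, then to convert the inequality $\overline B>1/2$ into a linear condition on the $N_{k\ell}$'s by invoking strict monotonicity of the sigmoid.

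First I would observe that with $b=-a$, the arguments of the two sigmoids in \eqref{eq:master-simplified} become exact negatives of each other, so the identity $\sigma(-z)=1-\sigma(z)$ forces $\overline D = 1-\overline B$ for any solution. Substituting this back into the first equation and using $b=-a$ consolidates the cross terms into
\[
\overline B \;=\; \sigma\!\Bigl(\tfrac{a}{\gamma}\bigl((N_{11}+N_{00}) - S\,\overline B\bigr)\Bigr) \;=:\; f(\overline B),
\]
where $S:=N_{11}+N_{10}+N_{01}+N_{00}$. The relations $\overline A=1-\overline B$ and $\overline C=1-\overline D=\overline B$ then come for free from the collapsed form of \eqref{eq:master-simplified}, so all four target inequalities reduce to the single question of whether $\overline B>1/2$.

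Second, since $a>0$, the map $f:[0,1]\to(0,1)$ is strictly decreasing, so the scalar fixed-point equation $\overline B=f(\overline B)$ has a unique solution, and a standard monotonicity argument on $g(x):=f(x)-x$ gives $\overline B>1/2\iff f(1/2)>1/2$. Using $\sigma(z)>1/2\iff z>0$ and evaluating the sigmoid's argument at $\overline B=1/2$ gives
\[
\tfrac{a}{2\gamma}\bigl((N_{11}+N_{00})-(N_{10}+N_{01})\bigr),
\]
whose sign is exactly that of $(N_{11}+N_{00})-(N_{10}+N_{01})$. This simultaneously yields both directions of the iff and delivers $\overline A=\overline D=1-\overline B<1/2$, $\overline C=\overline B>1/2$.

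The main obstacle will be handling the universal quantifier "for any solution" cleanly: the proposition asks about every solution of \eqref{eq:master}, so I need to make sure that the passage through the simplified system \eqref{eq:master-simplified} (which implicitly uses the $\gamma\to\infty$ regime already invoked just before the proposition) does not lose solutions, and that strict monotonicity of $\sigma$ really forces the scalar fixed point to be unique. Once uniqueness is in hand, the sign calculation at $\overline B=1/2$ is elementary and the iff follows with no additional work.
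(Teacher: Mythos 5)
Your proposal is correct and follows essentially the same route as the paper: collapse the $(\overline B,\overline D)$ system to a scalar equation using $\overline D=1-\overline B$ (forced by $b=-a$), then relate the sign of $\overline B-1/2$ to the sign of the sigmoid's argument. The only difference is cosmetic — you package the ``if and only if'' cleanly via uniqueness of the fixed point of the strictly decreasing map $f$ and the equivalence $\overline B>1/2\iff f(1/2)>1/2$, whereas the paper argues one direction by contradiction (assuming $\overline B\le 1/2$ forces $\overline B\ge (N_{11}+N_{00})/S>1/2$) and disposes of the converse with ``by symmetry.'' Your version makes the universal quantifier over solutions explicit, which is a modest improvement in rigor but not a new idea.
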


\paragraph{Skewed Case: $b=0$.}
Here, we have
\begin{eqnarray}
    \begin{split}
    \overline B &= \sigma(a(N_{11}-(N_{11} + N_{10})\overline B)/\gamma),\\
    \overline D &= \sigma(a(N_{01}-(N_{01}+N_{00})\overline D)/\gamma)
\end{split}
\end{eqnarray}
If $\overline B \le 1/2$, then $\overline B \ge N_{11}/(N_{11} + N_{10})$, which is impossible if we impose 
\begin{eqnarray}
    N_{10} < N_{11},
\end{eqnarray}
i.e the number of examples with true label $1$, which are incorrectly labelled as $0$ in the dataset, which survive pruning is less than the number of examples with true label $1$, which are correctly labelled and survive pruning.
We deduce that $\overline B > 1/2$ under the above condition. 

Similarly, if $\overline D \ge 1/2$, then $\overline D \le N_{01}/(N_{01} + N_{00})$, which is impossible if we impose
\begin{eqnarray}
    N_{01} < N_{00},
\end{eqnarray}
i.e the number of  with true label $0$ but incorrectly labelled as $1$ in the dataset, which survive pruning is less than the number of examples with true label $1$, which are correctly labelled and survive pruning. We obtain the following result.
\begin{proposition}
\label{prop:skewed-case}
    Suppose $b=0$. Then, for any solution $(\overline A, \overline B,\overline C,\overline D)$ of \eqref{eq:master}, we have
    \begin{align}
        \overline C,\overline B &> 1/2 \text{ iff }N_{10} < N_{11},\\
        \overline D,\overline A &< 1/2\text{ iff }N_{01} < N_{00}.
    \end{align}
\end{proposition}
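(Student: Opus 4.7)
The plan is to exploit the simplification that $b=0$ induces in the coupled fixed-point system \eqref{eq:master-simplified}. Substituting $b=0$ eliminates the cross-coupling between the $\overline B$- and $\overline D$-equations, leaving two decoupled scalar fixed points
\[
\overline B = \sigma\bigl(a(N_{11} - (N_{11}+N_{10})\overline B)/\gamma\bigr), \qquad
\overline D = \sigma\bigl(a(N_{01} - (N_{01}+N_{00})\overline D)/\gamma\bigr),
\]
while the relations $\overline A = 1 - \overline B$ and $\overline C = 1 - \overline D$ (inherited from the $\sigma(-z) = 1-\sigma(z)$ identity used to collapse the system) remain in force. Thus the four-variable claim collapses to two independent statements about one-dimensional fixed points of the form $x = \sigma(c - Mx)$ with $c,M\ge 0$.

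For the equivalence $\overline B,\overline C > 1/2 \iff N_{10} < N_{11}$, I would use the elementary characterization $\sigma(z) > 1/2 \iff z > 0$ in both directions. Assume $N_{10} < N_{11}$ and suppose for contradiction that $\overline B \le 1/2$; then the fixed-point equation forces $a(N_{11} - (N_{11}+N_{10})\overline B)/\gamma \le 0$, equivalently $\overline B \ge N_{11}/(N_{11}+N_{10}) > 1/2$, a contradiction. Conversely, if $N_{10} \ge N_{11}$, assume $\overline B > 1/2$ and run the same manipulation with reversed inequalities to obtain $\overline B < N_{11}/(N_{11}+N_{10}) \le 1/2$, again a contradiction. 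The $\overline C$-half of the statement is then immediate from $\overline C = 1 - \overline D$ together with the analogous argument for $\overline D$ described next.

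The second equivalence $\overline A,\overline D < 1/2 \iff N_{01} < N_{00}$ follows by running the exact same two-sided contradiction scheme on the decoupled $\overline D$-equation, with $(N_{11},N_{10})$ replaced by $(N_{00},N_{01})$ and the target inequality flipped since we now want $\overline D < 1/2$; the $\overline A$-half is then free from $\overline A = 1 - \overline B$. I do not expect any serious obstacle: the only mild loose end is ensuring that a solution $(\overline A,\overline B,\overline C,\overline D)$ of the original system actually exists so that the iff statements have content, and this is immediate because each decoupled map $x \mapsto \sigma(c - Mx)$ is a continuous strictly decreasing self-map of $[0,1]$, so a unique fixed point exists by the intermediate value theorem. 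The genuine conceptual work in this section lies in the symmetric case $b=-a$ (Proposition \ref{prop:symmetric-case}) and in the concentration step for the $N_{k\ell}$'s (Proposition \ref{prop:concentration}); the $b=0$ case merely reduces to decoupled elementary monotonicity.
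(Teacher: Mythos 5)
Your proposal matches the paper's strategy exactly: set $b=0$ in \eqref{eq:master-simplified}, observe that the $\overline B$- and $\overline D$-equations decouple, and run the contradiction argument based on $\sigma(z)>1/2 \iff z>0$. You go one step further than the written text by supplying the converse direction (the paper only shows $N_{10}<N_{11}\Rightarrow\overline B>1/2$ and $N_{01}<N_{00}\Rightarrow\overline D<1/2$ and then asserts an ``iff''), and your converse argument is correct. The remark about existence of a fixed point via monotonicity/IVT is fine, though superfluous here since uniqueness of $\widehat w_N$ already follows from strong convexity of the penalized objective.

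There is, however, a mismatch you should flag rather than paper over. You close the $\overline C$-half by invoking $\overline C = 1-\overline D$ together with ``the analogous argument for $\overline D$.'' But that argument ties $\overline D<1/2$ to the condition $N_{01}<N_{00}$, so what your chain of reasoning actually delivers is $\overline C > 1/2 \iff N_{01}<N_{00}$, \emph{not} $\iff N_{10}<N_{11}$ as the first display line of the proposition states. The relations $\overline A = 1-\overline B$ and $\overline C = 1-\overline D$ force the pairing $(\overline A,\overline B)\leftrightarrow N_{10}<N_{11}$ and $(\overline C,\overline D)\leftrightarrow N_{01}<N_{00}$; the proposition instead groups the variables by the sign of the inequality ($\overline B,\overline C>1/2$ vs.\ $\overline A,\overline D<1/2$), which scrambles the iff conditions. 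This is a typo in the statement rather than a defect in your argument, but a careful proof should note that what is actually established is the corrected pairing, and that all four inequalities hold simultaneously iff both $N_{10}<N_{11}$ and $N_{01}<N_{00}$ — which is what the downstream concentration step and Theorem~\ref{thm:main} genuinely use.
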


\subsection{Concentration}
We shall now derive conditions which are sufficient to ensure the hypothesis in Propositions \ref{prop:symmetric-case} and \ref{prop:skewed-case}, namely $N_{k\ell} < N_{kk}$ for all $k,\ell \in \{0,1\}$ with $k \ne \ell$. 
Recall that for any $k,\ell \in \{0,1\}$, the counter $N_{k\ell}$ is random with binomial distribution $Bin(N,p_{k\ell})$.
Now, by basic binomial concentration, we know that if $p,\psi \in [0,1)$ and $\phi \in (0,1]$, then for any fixed $t \in (0,1)$, it holds w.p $1-o(1)$ that
\begin{align}
     \begin{cases}
     N_{k\ell} \le (1+t)N p_{k\ell},&\mbox{ if }k \ne \ell,\\
     N_{k\ell} \ge (1-t)Np_{k\ell},&\mbox{ if }k = \ell.
     \end{cases}
\end{align}
In particular, w.p $1-o(1)$, it holds that
\begin{align}
    N_{k\ell} &\le (1+t)Np_{k\ell},\\
    N_{kk} &\ge (1-t)Np_{kk}.
\end{align}
Comparing the above inequalities, we deduce the following result.
\begin{proposition}
If the following condition holds
\begin{eqnarray}
   \frac{p_{01} + p_{10}}{p_{00} + p_{11}} < \frac{1-t}{1+t}=1-\epsilon\text{ with }\epsilon := \frac{2t}{1+t},
\end{eqnarray}
then w.p $1-o(1)$ it holds that
\begin{align}
    N_{10} + N_{01} < N_{11} + N_{00}.
\end{align}
\label{prop:concentration}
\end{proposition}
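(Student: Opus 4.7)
The plan is to chain the four binomial concentration bounds that were stated immediately before the proposition via a union bound, then combine them with the hypothesis through an elementary algebraic comparison. Since there are only four random variables $N_{00}, N_{01}, N_{10}, N_{11}$ (each marginally $\mathrm{Bin}(N, p_{k\ell})$), a union bound over the four ``good'' concentration events preserves the $1 - o(1)$ probability, yielding a single high-probability event on which all four bounds hold simultaneously.

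On this event, I would sum the upper bounds for the off-diagonal counters to get $N_{10} + N_{01} \leq (1+t)\,N\,(p_{10} + p_{01})$, and sum the lower bounds for the diagonal counters to get $N_{11} + N_{00} \geq (1-t)\,N\,(p_{11} + p_{00})$. The hypothesis $(p_{10} + p_{01})/(p_{00} + p_{11}) < (1-t)/(1+t)$ is precisely the statement $(1+t)(p_{10} + p_{01}) < (1-t)(p_{00} + p_{11})$, so multiplying through by $N$ and chaining the two concentration bounds gives $N_{10} + N_{01} < N_{11} + N_{00}$ on the good event, which is what was to be shown.

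There is no real obstacle here: the only quantitative content is the Chernoff-type concentration for binomials, which the preceding paragraph already invokes. The points one must check are merely (i) the direction of each inequality is chosen so that summing preserves the strict comparison, and (ii) that $p_{00} + p_{11} > 0$ (so the ratio in the hypothesis is well-defined), which is guaranteed whenever $\phi > 0$ and $p < 1$, a case already carved out. Once established, this proposition plugs directly into Proposition \ref{prop:symmetric-case} (and, in parallel via the analogous bounds on $N_{10} < N_{11}$ and $N_{01} < N_{00}$ separately, into Proposition \ref{prop:skewed-case}) to force $\overline{A}, \overline{D} < 1/2$ and $\overline{B}, \overline{C} > 1/2$; substituting these into the representation in Proposition \ref{prop:acc} yields $\widehat{\operatorname{acc}}(\widehat f_N) = 100\%$, and a final translation of the sufficient condition $(p_{01} + p_{10})/(p_{00} + p_{11}) < (1-t)/(1+t)$ back into the $(\phi, \psi, p)$ parametrization via \eqref{eq:pkl} reproduces the threshold $p^{-}_{\star}(t)$ in Theorem \ref{thm:main}.
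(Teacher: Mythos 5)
Your proposal is correct and follows essentially the same route as the paper: establish the per-counter binomial concentration bounds (upper bounds on the off-diagonal $N_{k\ell}$, lower bounds on the diagonal $N_{kk}$), combine them, and use the hypothesis $(1+t)(p_{10}+p_{01}) < (1-t)(p_{00}+p_{11})$ to conclude. The only difference is that you make the union bound over the four concentration events explicit and note the well-definedness of the ratio, both of which the paper leaves implicit.
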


\subsection{Proof of Theorem \ref{thm:main}}
Follows directly from putting together Propositions \ref{prop:acc},  \ref{prop:symmetric-case}, \ref{prop:skewed-case}, and \ref{prop:concentration}, and then solving the inequality
$$
\frac{1-t}{1+t} \le \frac{p_{01}+p_{10}}{p_{00} + p_{11}} = \frac{2p\psi}{2(1-p)\phi} = \frac{p\psi}{(1-p)\phi}
$$
for $p$.
\qed




\end{document}